\theoremstyle{plain}
\newtheorem{theorem}{Theorem}[section]
\newtheorem{lemma}[theorem]{Lemma}
\newtheorem{corollary}[theorem]{Corollary}
\theoremstyle{definition}
\newtheorem{definition}[theorem]{Definition}
\theoremstyle{remark}
\newcommand{\calN}{\mathcal{N}}
\newcommand{\bR}{\mathbb{R}}
\newcommand{\bP}{\mathbb{P}}
\newcommand{\calF}{\mathcal{F}}
\newcommand{\calG}{\mathcal{G}}
\newcommand{\calC}{\mathcal{C}}
\icmltitlerunning{Subgraph GNNs for Graphs with Bounded Cycles}
\begin{document}

\twocolumn[
\icmltitle{On the Expressive Power of Subgraph Graph Neural Networks for Graphs with Bounded Cycles}



\icmlsetsymbol{equal}{*}

\begin{icmlauthorlist}
\icmlauthor{Ziang Chen}{xxx}
\icmlauthor{Qiao Zhang}{yyy}
\icmlauthor{Runzhong Wang}{zzz}
\end{icmlauthorlist}

\icmlaffiliation{xxx}{Department of Mathematics, Massachusetts Institute of Technology, Cambridge, MA, United States}
\icmlaffiliation{yyy}{Sierra Canyon School, Chatsworth, CA, United States}
\icmlaffiliation{zzz}{Department of Chemical Engineering, Massachusetts Institute of Technology, Cambridge, MA, United States}

\icmlcorrespondingauthor{Ziang Chen}{ziang@mit.edu}
\icmlcorrespondingauthor{Runzhong Wang}{runzhong@mit.edu}

\icmlkeywords{Machine Learning, ICML}

\vskip 0.3in
]



\printAffiliationsAndNotice{}  

\begin{abstract}
Graph neural networks (GNNs) have been widely used in graph-related contexts. It is known that the separation power of GNNs is equivalent to that of the Weisfeiler-Lehman (WL) test; hence, GNNs are imperfect at identifying all non-isomorphic graphs, which severely limits their expressive power. This work investigates $k$-hop subgraph GNNs that aggregate information from neighbors with distances up to $k$ and incorporate the subgraph structure. We prove that under appropriate assumptions, the $k$-hop subgraph GNNs can approximate any permutation-invariant/equivariant continuous function over graphs without cycles of length greater than $2k+1$ within any error tolerance. We also provide an extension to $k$-hop GNNs without incorporating the subgraph structure. Our numerical experiments on established benchmarks and novel architectures validate our theory on the relationship between the information aggregation distance and the cycle size.
\end{abstract}

\section{Introduction}

Graph-based machine learning models, known as graph neural networks (GNNs) \citep{scarselli2008graph,wu2020comprehensive,zhou2020graph,kipf2016semi,velivckovic2017graph}, have emerged as powerful tools for interpreting and making predictions on data that can be represented as networks of interconnected points. These models excel at uncovering the underlying structure in graph data, leading to breakthroughs in numerous sectors, including but not limited to physics \citep{shlomi2020graph}, chemistry \citep{Reiser2022GraphNN,Fung2021BenchmarkingGN,Coley2018AGN}, bioinformatics \citep{zhang2021graph}, finance \citep{wang2021review}, electronic engineering \citep{liao2021review, he2021overview, lee2022graph}, and operations research \citep{gasse2019exact}.

From a theoretical standpoint, GNNs are utilized to learn or approximate functions on graph-structured data. It is crucial to analyze and understand the expressiveness of GNNs, that is, to determine the class of functions on graphs that these networks can effectively approximate. This analysis offers valuable insights that guide the design of more powerful and efficient GNN architectures.

A fundamental concept in the domain of GNNs is the message-passing mechanism \citep{Gilmer2017NeuralMP}, which progressively refines node representations by aggregating information from neighboring nodes. To formalize this process, consider a graph $ G $ composed of a vertex set $ V = \{v_1,v_2,\dots,v_n\}$ and an edge set $ E $, with each vertex $ v_i \in V $ initially endowed with a feature vector $h_i^{(0)}$. The message-passing scheme iteratively updates these vectors. At each iteration or layer, the update for a given vertex integrates input from its immediate neighbors through a combination of two steps: a local transformation and an aggregation. The local transformation applies a learnable function to each neighbor's feature, while the aggregation step combines these transformed features using a permutation-invariant operation such as summing, averaging, or taking the maximum value. This aggregated information is then merged with the current feature of the vertex to produce an updated representation.
Mathematically, in the $ l $-th iteration/layer, the updated feature $ h_i^{(l)} $ for vertex $ v_i $ can be expressed as:
\begin{equation}\label{eq:MP}
    \begin{split}
        h_i^{(l)} = f^{(l)} \bigg( h_i^{(l-1)}, \mathrm{AGGR}\Big(\Big\{\Big\{g^{(l)}(h_j^{(l-1)}) \quad\quad\quad\ \  & \\
        : v_j \text{ adjacent to } v_i\Big\}\Big\}\Big) \bigg), &
    \end{split}
\end{equation}
where $ f^{(l)} $ and $ g^{(l)} $ are learnable functions at layer $ l $, and $ h_j^{(l-1)} $ represents the feature of vertex $ v_j $ from the previous layer. The notation $ \{\{\cdot\}\} $ indicates a multiset that can accommodate duplicate elements, ensuring that all contributions from neighboring vertices are considered, even if some vertices share the same feature values.

Despite their empirical successes, message-passing graph neural networks (MP-GNNs) have limitations in terms of separation power or expressive power. Notably, they can fail to differentiate between certain non-isomorphic graphs. For instance, as illustrated in Figure~\ref{fig:not_iso_WL}, consider two distinct graphs where vertices of the same color start with identical features. Even though these graphs are structurally different, MP-GNNs cannot distinguish between them because vertices of matching colors will end up with the same feature representations after any number of message-passing rounds. This outcome persists irrespective of the specific choices for functions $ f^{(l)}, g^{(l)} $, or the aggregation method employed. The reason is that each vertex gathers indistinguishable aggregated information from its neighbors, leading MP-GNNs to perceive these non-isomorphic structures as identical.

    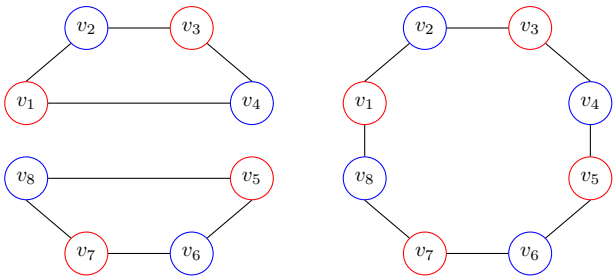
\begin{figure}[htb!]
    \centering
		\begin{tikzpicture}[
			constraintb/.style={circle, draw = blue, scale = 0.8},
			variablex/.style={circle, draw = red, scale = 0.8},
			]
			
			\draw (-0.7,1.5) node[constraintb] (v1) {$v_2$};
			\draw (1.5,0.5) node[constraintb] (v2) {$v_4$};
			\draw (1.5,-0.5) node[variablex] (v3) {$v_5$};
			\draw (-0.7,-1.5) node[variablex] (v4) {$v_7$};
			\draw (0.7,1.5) node[variablex] (w1) {$v_3$};
			\draw (-1.5,0.5) node[variablex] (w2) {$v_1$};
			\draw (-1.5,-0.5) node[constraintb] (w3) {$v_8$};
			\draw (0.7,-1.5) node[constraintb] (w4) {$v_6$};
			
			\draw[-] (v1.east) -- (w1.west);
            \draw[-] (w1.south east) -- (v2.north);
            \draw[-] (v2.west) -- (w2.east);
            \draw[-] (w2.north) -- (v1.south west);

            \draw[-] (v3.south) -- (w4.north east);
            \draw[-] (w4.west) -- (v4.east);
            \draw[-] (v4.north west) -- (w3.south);
            \draw[-] (w3.east) -- (v3.west);
			
			\draw (3.8,1.5) node[constraintb] (2v1) {$v_2$};
			\draw (6,0.5) node[constraintb] (2v2) {$v_4$};
			\draw (6,-0.5) node[variablex] (2v3) {$v_5$};
			\draw (3.8,-1.5) node[variablex] (2v4) {$v_7$};
			\draw (5.2,1.5) node[variablex] (2w1) {$v_3$};
			\draw (3,0.5) node[variablex] (2w2) {$v_1$};
			\draw (3,-0.5) node[constraintb] (2w3) {$v_8$};
			\draw (5.2,-1.5) node[constraintb] (2w4) {$v_6$};
			
			\draw[-] (2v1.east) -- (2w1.west);
            \draw[-] (2w1.south east) -- (2v2.north);
            \draw[-] (2v2.south) -- (2v3.north);
            \draw[-] (2w2.north) -- (2v1.south west);

            \draw[-] (2v3.south) -- (2w4.north east);
            \draw[-] (2w4.west) -- (2v4.east);
            \draw[-] (2v4.north west) -- (2w3.south);
            \draw[-] (2w3.north) -- (2w2.south);
			
		\end{tikzpicture}
    \caption{Two non-isomorphic graphs that cannot be distinguished by MP-GNNs or the WL test.}
	\label{fig:not_iso_WL}
    \end{figure}

    The separation and expressive power of MP-GNNs are closely tied to the Weisfeiler-Lehman (WL) test \citep{weisfeiler1968reduction}, a classic algorithm designed to tackle the graph isomorphism problem. At its core, the WL test operates as a color refinement procedure: initially, each vertex $ v_i $ is assigned a color $ C^{(0)}(v_i) $ based on its initial features. The algorithm then proceeds iteratively by updating the colors according to the following rule:
\begin{equation}\label{eq:WL}
    \begin{split}
        C^{(l)}(v_i) = \mathrm{HASH}\bigg( C^{(l-1)}(v_i), \Big\{\Big\{C^{(l-1)}(v_j) :\quad\quad & \\
        v_j \text{ adjacent to } v_i\Big\}\Big\}\bigg), &
    \end{split}
\end{equation}
which mirrors the structure of the update mechanism in \eqref{eq:MP}. Assuming the hash function is collision-free, two vertices will share the same color at iteration $ l $ if and only if they have identical colors and multisets of neighbors' colors at iteration $ l-1 $. The WL test concludes when the color partition stabilizes, typically within no more than $ n $ iterations, where $ n $ is the number of vertices. It deems two graphs isomorphic if their final color multisets match.

It has been demonstrated that MP-GNNs possess separation power equivalent to that of the Weisfeiler-Lehman (WL) test \citep{xu2018powerful}. This means that two graphs are identified as non-isomorphic by the WL test if and only if they produce distinct outputs in some MP-GNN. Furthermore, it has been proven in \citet{azizian2020expressive, geerts2022expressiveness} that GNNs can universally approximate any continuous functions whose separation capabilities are bounded above by the associated WL test. However, no polynomial-time algorithms are known to perfectly solve the graph isomorphism problem, implying that the WL test cannot distinguish certain pairs of non-isomorphic graphs, such as those illustrated in Figure~\ref{fig:not_iso_WL}. As a result, MP-GNNs are unable to represent or approximate all permutation-invariant or permutation-equivariant functions.

In response to these limitations, researchers have proposed alternative GNN architectures designed to enhance separation capabilities. A prominent approach in the literature involves the use of higher-order GNNs \citep{morris2019weisfeiler, maron2019provably, geerts2020expressive, geerts2020walk, azizian2020expressive, zhao2022practical, geerts2022expressiveness, morris2020weisfeiler}, which correspond to higher-order WL tests \citep{cai1992optimal}. Essentially, a $ k $-th order GNN assigns features to each $ k $-tuple of vertices and updates these features based on information from neighboring tuples. This mechanism allows for a more nuanced representation of graph structures, thereby improving the model's ability to distinguish between non-isomorphic graphs.

In this work, we explore an alternative technique that has gained prominence in recent literature \citep{zhang2021nested,zhao2021stars,bevilacqua2021equivariant,frasca2022understanding} to enhance the expressive power of MP-GNNs. This approach involves incorporating subgraph structures, moving beyond the reliance on vertex features from immediate neighboring nodes alone. Architectures that adopt this strategy are referred to as subgraph GNNs. By integrating subgraph information, these models can capture more complex and nuanced patterns within graph data, thereby improving their ability to distinguish between different graph structures \citep{feng2022powerful,huangboosting}. We further perform numerical validation with established benchmarks~\citep{gomez2018automatic} and GNN architectures~\citep{ying2021transformers} in graph learning, whereby a strong correlation between theoretical and numerical results is witnessed.

\paragraph{Our contributions} The main contributions of this paper are summarized as follows.
\begin{itemize}
    \item We rigorously characterize the separation power of subgraph GNNs by demonstrating that they can perfectly distinguish a large family of graphs with bounded cycle lengths. Specifically, under appropriate assumptions, we prove that GNNs leveraging subgraph structures within a distance $ k $ from the current vertex can distinguish all non-isomorphic graphs that do not contain cycles of length greater than $ 2k+1 $. Based on this result, we show that such $k$-hop subgraph GNNs can approximate any permutation-invariant/equivariant continuous functions on graphs without cycles of length greater than $ 2k+1 $.
    \item We extend the result to $k$-hop GNN that aggregates vertex features within distance $k$ but does not leverage the subgraph structure. Such GNN architecture is provably expressive on graphs with no cycles of length greater than $ 2k-1 $.
    \item We empirically validate our theoretical findings on the relationship between the information aggregation distance $k$ and the cycle size $(\sim 2k)$, providing valuable insights on selecting practical information aggregation distance.
\end{itemize}

    \paragraph{Organization}
    The rest of this paper will be organized as follows. We define subgraph GNNs and the associated WL test and introduce the motivation in \Cref{sec:subgraph_GNN}. Our main theory for the expressive power of $k$-hop subgraph GNNs is presented in \Cref{sec:main_result} with an extension to $k$-hop GNNs. \Cref{sec:numerics} presents the numerical experiments and the whole paper is concluded in \Cref{sec:conclude}.

\section{Subgraph graph neural networks and the Weisfeiler-Lehman test}
\label{sec:subgraph_GNN}

This section describes the motivation and sets up the basics for subgraph GNNs.

\subsection{Motivation}
Note that MP-GNNs \eqref{eq:MP} have a fundamental limitation that they fail to distinguish some non-isomorphic graphs, such as those in Figure~\ref{fig:not_iso_WL}. One idea to enhance the expressive power is to incorporate more information from neighboring vertices and subgraphs:
\begin{itemize}
    \item The aggregation in \eqref{eq:MP} uses $\calN(v_i)$, the set of neighbors of $v_i$. To incorporate additional information, one can define $d(u,v)$ as the shortest-path distance between $u$ and $v$ in the graph $G$ and
    \begin{equation*}
        \calN_k(v_i) := \left\{v \in G : d(v, v_i) \leq k \right\}, \quad k \geq 1.
    \end{equation*}
    
    \item Beyond the features of vertices in $\calN_k(v_i)$, one can also capture edge information, i.e., whether two vertices are connected. This means that the topology of $G|_{\calN_k(v_i)}$, the subgraph of $G$ restricted to $\calN_k(v_i)$ (known as the $k$-hop subgraph rooted at $v_i$), can be used to update the feature of $v_i$.
\end{itemize}
Let $(G, h^{(l-1)})_{v_i, k}$ denote the subgraph $G|_{\calN_k(v_i)}$ rooted at $v_i$, with each vertex having a feature from $h^{(l-1)}$. Accordingly, the vertex feature update rule is given by
\begin{equation}\label{eq:khop_GNN}
    h_i^{(l)} = f^{(l)}\left(h_i^{(l-1)}, g^{(l)}\left((G, h^{(l-1)})_{v_i, k}\right)\right).
\end{equation}
The functions $f^{(l)}$ and $g^{(l)}$ are learnable, with $g^{(l)}$ taking constant value on isomorphic rooted graphs. This scheme, termed the \emph{$k$-hop subgraph GNN}, has various applications and adaptations in the existing literature \cite{zhang2021nested, zhao2021stars, bevilacqua2021equivariant, frasca2022understanding}. Notably, the learnable function $g^{(l)}$ is often parameterized as another GNN applied to the smaller subgraph $(G, h^{l-1})_{v_i, k}$.

Consider the two non-isomorphic graphs in Figure~\ref{fig:not_iso_WL} indistinguishable by MP-GNNs. It can be seen that the $2$-hop subgraph GNN can successfully distinguish them. Specifically, the $2$-hop subgraphs rooted at $v_1$ are shown in Figure~\ref{fig:2hop} and are clearly non-isomorphic, indicating that $v_1$ in the two graphs in Figure~\ref{fig:not_iso_WL} will have different feature after one layer of the $2$-hop subgraph GNN, as long as $g^{(1)}$ can distinguish these two non-isomorphic subgraphs.
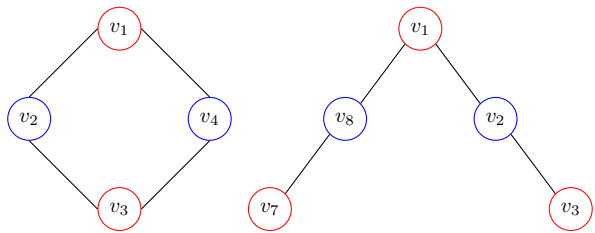
\begin{figure}[htb!]
    \centering
		\begin{tikzpicture}[
			constraintb/.style={circle, draw = blue, scale = 0.8},
			variablex/.style={circle, draw = red, scale = 0.8},
			]
			
			\draw (-1.2,0) node[constraintb] (v1) {$v_2$};
			\draw (1.2,0) node[constraintb] (v2) {$v_4$};
			\draw (0,-1.2) node[variablex] (w1) {$v_3$};
			\draw (0,1.2) node[variablex] (w2) {$v_1$};
			
			\draw[-] (v1.south) -- (w1.west);
            \draw[-] (w1.east) -- (v2.south);
            \draw[-] (v2.north) -- (w2.east);
            \draw[-] (w2.west) -- (v1.north);
			
			\draw (5,0) node[constraintb] (2v1) {$v_2$};
			\draw (2,-1.2) node[variablex] (2v4) {$v_7$};
			\draw (6,-1.2) node[variablex] (2w1) {$v_3$};
			\draw (4,1.2) node[variablex] (2w2) {$v_1$};
			\draw (3,0) node[constraintb] (2w3) {$v_8$};
			
			\draw[-] (2v1.south east) -- (2w1.north west);
            \draw[-] (2w2.south east) -- (2v1.north west);
            \draw[-] (2v4.north east) -- (2w3.south west);
            \draw[-] (2w3.north east) -- (2w2.south west);
			
		\end{tikzpicture}
    \caption{$2$-hop subgraphs rooted at $v_1$ for graphs in Figure~\ref{fig:not_iso_WL}}
	\label{fig:2hop}
    \end{figure}

    Another observation is that the $2$-hop subgraph GNN fails if we increase the cycle sizes in Figure~\ref{fig:not_iso_WL}—for instance, by changing one graph to have two $6$-cycles and the other a single $12$-cycle. In general, the $k$-hop subgraph GNN fails to distinguish between a graph with two $(2k+2)$-cycles and one with a single $(4k+4)$-cycle, though it succeeds when the cycle sizes are smaller. This suggests that larger cycles limit the separation power of the $k$-hop subgraph GNN.

    Such observation and motivation align with empirical findings in the literature. The ZINC dataset \cite{dwivedi2023benchmarking} consists of molecular graphs with no large cycles, and variants of subgraph GNNs have shown notable improvement over message-passing GNNs on this dataset \cite{zhang2023rethinking,zhang2021nested,zhao2021stars,bevilacqua2021equivariant,frasca2022understanding}.

At the end of this subsection, we comment that the above observation on $k$-hop subgraph GNNs requires that $g^{(l)}$ has a relatively strong separation power on the subgraph $(G, h^{(l-1)})_{v_i, k}$, which is achievable if the size of $(G, h^{(l-1)})_{v_i, k}$ is small or $G$ is sparse. However, if $G$ is relatively dense, then the size of $(G, h^{(l-1)})_{v_i, k}$ might be comparable with $G$ and the subproblem of finding expressive $g^{(l)}$ might be difficult.

\subsection{$k$-hop subgraph GNNs}

We rigorously define the $k$-hop subgraph GNNs in this subsection, for which we define the graph space first.

\begin{definition}[Space of graphs with vertex features]
    We use $\calG_{n,m}$ to denote the space of all undirected unweighted graphs of $n$ vertices with each vertex equipped with a feature in $\bR^{m}$. The space $\calG_{n,m}$ is equipped with the product topology of discrete topology (of graphs without vertex features) and Euclidean topology (of vertex features).
\end{definition}

We use $(G,H)$ to denote an element in $\calG_{n,m}$ where $G=(E,V)$ is an undirected unweighted graph, and $H=(h_1,h_2,\dots,h_n)$ is the collection of all vertex features.
Given $(G,H)\in\calG_{n,m}$ and $k\geq 1$, the $k$-hop subgraph GNN is defined as follows. 
\begin{itemize}
    \item The embedding layer maps each vertex feature $h_i\in\bR^m$ as an embedding vector
    \begin{equation*}
        h_i^{(0)} = f^{(0)}(h_i),
    \end{equation*}
    where $f^{(0)}$ is learnable.
    \item For $l=1,2,\dots,L$, the information aggregation layer computes $h_i^{(l)}$ according to \eqref{eq:khop_GNN} for $i=1,2,\dots,n$.
    \item There are two types of outputs. The graph-level output computes a real number for the whole graph, namely
    \begin{equation*}
        y = r\left(\mathrm{AGGR}\left(\left\{\left\{h_i^{(L)}:i\in\{1,2,\dots,n\}\right\}\right\}\right)\right),
    \end{equation*}
    where $r$ is learnable.
    The vertex-level output assigns a real number for each vertex:
    \begin{equation*}
        y_i = r(h_i^{(L)}), \quad i=1,2,\dots,n.
    \end{equation*}
\end{itemize}
In general, the intermediate vertex features $h_i^{(l)}$ can be defined in any topological space, while one usually uses Euclidean spaces in practice. Throughout this paper, we always consider continuous $f^{(l)}$, $g^{(l)}$, and $r$, guaranteeing that all $k$-hop subgraph GNNs are continuous.

\begin{definition}[Spaces of $k$-hop subgraph GNNs]
    We use $\calF_k$ to denote the collection of all $k$-hop subgraph GNNs with graph-level output, and use $\calF_{k,v}$ to denote the collection of all $k$-hop subgraph GNNs with vertex-level output.
\end{definition}

It is clear that a $k$-hop subgraph GNN with graph-level output is permutation-invariant, and a $k$-hop subgraph GNN with vertex-level output is permutation-equivariant, with respect to the following definition.

\begin{definition}[Permutation-invariant and permutation-equivariant functions]
    We say that a function $\Phi:\calG_{n,m}\to\bR$ is permutation-invariant if
    \begin{equation*}
        \Phi(\sigma\ast(G,H)) = \Phi(G,H),\quad\forall~\sigma\in S_n,
    \end{equation*}
    where $S_n$ denotes the permutation group on $\{1,2,\dots,n\}$ and $\sigma\ast(G,H)$ is the graph obtained by relabeling vertices in $(G,H)$ according to the permutation $\sigma$,
    and that a function $\Phi:\calG_{n,m}\to\bR^n$ is permutation-equivariant if 
    \begin{equation*}
        \Phi(\sigma\ast(G,H)) = \sigma(\Phi(G,H)),\quad\forall~\sigma\in S_n.
    \end{equation*}
\end{definition}

\subsection{$k$-hop subgraph WL test and the equivalent separation power}
\label{sec:equivalent_separation}

It is known that the MP-GNNs and the classic WL test have equivalent separation power \cite{xu2018powerful}. The same equivalence extends naturally for other variants of GNNs and WL test. In particular, we consider the $k$-hop subgraph GNN and the associated WL test stated in Algorithm~\ref{alg:WL}.

\begin{algorithm}[htb!]
\caption{$k$-hop Subgraph Weisfeiler-Lehman test}\label{alg:WL}
\begin{algorithmic}
\REQUIRE A graph $(G,H)\in\calG_{n,m}$ and iteration limit.
\STATE Initialize the vertex color
\begin{equation*}
    C_i^{(0)}=\mathrm{HASH}(h_i),\quad i=1,2,\dots,n
\end{equation*}
\WHILE{$l=1$, $2$, \dots, $L$}
    \STATE Refine the color
    \begin{equation}\label{eq:color-refine}
        C_i^{(l)} = \mathrm{HASH}\left(C_i^{(l-1)},(G,C^{(l-1)}_{v_i,k})\right),
    \end{equation}
    for $i=1,2,\dots,n$.
\ENDWHILE
\STATE \textbf{Output:} Color multiset $\{\{C_i^{(L)}:i\in\{1,2,\dots,n\}\}\}$. 
\end{algorithmic}
\end{algorithm}

We define two equivalence relationships below that characterize the separation power of the $k$-hop subgraph WL test.

\begin{definition}
    For $(G,H),(\hat{G},\hat{H})\in\calG_{n,m}$, denote $\{\{C_i^{(L)}:i\in\{1,2,\dots,n\}\}\}$ and $\{\{\hat{C}_i^{(L)}:i\in\{1,2,\dots,n\}\}\}$ as their final color multisets output by the $k$-hop subgraph WL test.
    \begin{itemize}
        \item[(i)] We say $(G,H)\ensuremath{\stackrel{k}{\sim}}(\hat{G},\hat{H})$ if $\{\{C_i^{(L)}:i\in\{1,2,\dots,n\}\}\}=\{\{\hat{C}_i^{(L)}:i\in\{1,2,\dots,n\}\}\}$ for any $L>0$ and any hash function.
        \item[(ii)] We say $(G,H)\ensuremath{\stackrel{k,v}{\sim}}(\hat{G},\hat{H})$ if $C_i^{(L)}=\hat{C}_i^{(L)},\ i=1,2,\dots,n$, for any $L>0$ and any hash function.
    \end{itemize}
\end{definition}

We remark that two multisets are identical if for any element, its multiplicities in two multisets are the same. Throughout this paper, we would say that two graphs $(G,H)$ and $(\hat{G},\hat{H})$ are indistinguishable by $k$-hop subgraph WL test if $(G,H)\ensuremath{\stackrel{k}{\sim}}(\hat{G},\hat{H})$.

The following theorem states the equivalence between the separation power of $k$-hop subgraph GNNs and the $k$-hop subgraph WL test.

\begin{theorem}\label{thm:equiv_GNN_WL}
    For any $(G,H),(\hat{G},\hat{H})\in\calG_{n,m}$ and any $k>0$, the following are equivalent:
    \begin{itemize}
        \item[(i)] $(G,H)\ensuremath{\stackrel{k}{\sim}}(\hat{G},\hat{H})$.
        \item[(ii)] $F(G,H)=F(\hat{G},\hat{H})$ for any $F\in\calF_k$.
        \item[(iii)] For any $F_v\in \calF_{k,v}$, there exists $\sigma\in S_n$ such that $F_v(G,H) = \sigma(F_v(\hat{G},\hat{H}))$. 
    \end{itemize}
    Moreover, $(G,H)\ensuremath{\stackrel{k,v}{\sim}}(\hat{G},\hat{H})$ if and only if $F_v(G,H)=F_v(\hat G,\hat H)$ for any $F_v\in\calF_{k,v}$.
\end{theorem}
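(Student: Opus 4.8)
The plan is to prove the theorem in the standard style of WL/GNN equivalence results (as in \citet{xu2018powerful,azizian2020expressive}), establishing the cyclic chain of implications (i)$\Rightarrow$(ii)$\Rightarrow$(iii)$\Rightarrow$(i) and then separately the vertex-level refinement statement. The core technical content is really a single lemma comparing the color-refinement dynamics of Algorithm~\ref{alg:WL} with the feature-update dynamics \eqref{eq:khop_GNN}, so I would isolate that first.

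\textbf{Step 1: the key lemma (simulation in both directions).} I would prove that for a fixed graph $(G,H)\in\calG_{n,m}$ and fixed iteration count $L$, the partition of the vertex set induced by the colors $C^{(L)}_i$ coincides with the coarsest partition refinable by any $k$-hop subgraph GNN after $L$ layers. One direction: given any $F\in\calF_k$ with layers $f^{(l)},g^{(l)}$, an induction on $l$ shows that if $C^{(l-1)}_i = C^{(l-1)}_j$ then $h^{(l-1)}_i = h^{(l-1)}_j$; the inductive step uses that equal colors at step $l-1$ implies the rooted colored subgraphs $(G,C^{(l-1)}_{v_i,k})$ and $(G,C^{(l-1)}_{v_j,k})$ are isomorphic as rooted graphs, hence the colored subgraphs with features $h^{(l-1)}$ are isomorphic (because the feature assignment factors through the color assignment by the inductive hypothesis), and $g^{(l)}$ is constant on isomorphism classes of rooted colored subgraphs. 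The reverse direction: I would construct, for a fixed $L$, specific continuous $f^{(l)},g^{(l)}$ whose features $h^{(l)}_i$ are injective on color classes — this is where the real work lies. The standard move is to let the feature space be large enough (e.g.\ $\bR^m$ for $m$ growing with the number of distinct colors, which is finite) and define $g^{(l)}$ on rooted colored subgraphs via a continuous injection into Euclidean space; one must check that the finitely many isomorphism classes of rooted $k$-hop colored subgraphs appearing can be separated by a continuous function of the vertex features, and that $f^{(l)}$ can be chosen continuous to combine $h^{(l-1)}_i$ with $g^{(l)}(\cdot)$ injectively. Since the underlying graph topology is discrete and only finitely many graphs on $n$ vertices exist, continuity is not an obstruction — it only needs to hold on each connected component of the domain, which here means each fixed graph $G$.

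\textbf{Step 2: assembling the implications.} (i)$\Rightarrow$(ii): if the final color multisets agree, then by Step~1 the multisets $\{\{h^{(L)}_i\}\}$ agree for any $F\in\calF_k$ (choosing $L$ larger than the stabilization time for both graphs, using that stabilization happens in $\le n$ steps and that a GNN with more layers than needed can simulate one with fewer), hence $\mathrm{AGGR}$ and $r$ produce equal graph-level outputs. (ii)$\Rightarrow$(iii): contrapositive — if $(G,H)\not\ensuremath{\stackrel{k}{\sim}}(\hat G,\hat H)$ I would exhibit a graph-level $F$ separating them (using the injective construction of Step~1 composed with an injective multiset readout, e.g.\ a sum of sufficiently separated point masses), which also handles (iii) since a vertex-level separation with no matching permutation yields a graph-level separation by applying a symmetric readout; conversely if (i) holds then the color multisets agree and one shows the vertex colorings agree up to a permutation, so the vertex-feature tuples of any $F_v$ agree up to that same permutation. (iii)$\Rightarrow$(i): contrapositive, again from Step~1. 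The final ``Moreover'' statement is the same argument without the outer $\mathrm{AGGR}$: $(G,H)\ensuremath{\stackrel{k,v}{\sim}}(\hat G,\hat H)$ means the colorings agree vertex-by-vertex for all $L$, which by Step~1 forces $F_v(G,H)=F_v(\hat G,\hat H)$ for all $F_v\in\calF_{k,v}$; for the converse, the separating vertex-level GNN from Step~1 (with injective features) detects any index where the two colorings differ.

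\textbf{Main obstacle.} The delicate point is the ``reverse'' direction of Step~1: constructing continuous layer functions $f^{(l)}, g^{(l)}$ — in particular a continuous $g^{(l)}$ on rooted colored subgraphs — that are simultaneously injective enough to recover the full WL color after $L$ rounds, while respecting that $g^{(l)}$ must be constant on isomorphism classes and continuous in the vertex features. The saving grace is finiteness: on $\calG_{n,m}$ only finitely many graph skeletons occur, so $g^{(l)}$ need only be defined consistently and continuously on each of finitely many Euclidean pieces, and a partition-of-unity / Tietze-type extension argument, together with the fact that a finite set of distinct points in $\bR^m$ can be separated by a continuous map, makes the construction go through. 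A secondary bookkeeping obstacle is handling the dependence on $L$ and the ``for any $L>0$'' quantifier in the definition of $\ensuremath{\stackrel{k}{\sim}}$ — resolved by noting WL stabilizes in at most $n$ steps and that extra layers can be made to act as identity.
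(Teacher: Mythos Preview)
Your proposal is correct and is precisely the standard WL/GNN-equivalence argument the paper has in mind: the paper does not actually prove this theorem, stating only that the proof ``follows similar lines as in the proof of Theorem~4.2 in \cite{chen2022representing}, which is straightforward and is hence omitted.'' Your Step~1 simulation lemma together with the cyclic chain (i)$\Rightarrow$(ii)$\Rightarrow$(iii)$\Rightarrow$(i) and the separate vertex-level argument is exactly that standard route.
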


The proof of \Cref{thm:equiv_GNN_WL} follows similar lines as in the proof of Theorem 4.2 in \cite{chen2022representing}, which is straightforward and is hence omitted. Similar equivalence also holds for two vertices in a single graph, as stated in the corollary below.

\begin{corollary}\label{cor:equiv_GNN_WL}
    For any $(G,H)\in\calG_{n,m}$ and any $k>0$. Let $\{\{C_i^{(L)}:i\in\{1,2,\dots,n\}\}\}$ be the color multiset output by the $k$-hop subgraph WL test. For any $i,i'\in\{1,2,\dots,n\}$, the following are equivalent:
    \begin{itemize}
        \item[(i)] $C_i^{(L)} = C_{i'}^{(L)}$ for any $L>0$ and any hash function.
        \item[(ii)] $F_v(G,H)_i = F_v(G,H)_{i'}$ for any $F_v\in\calF_{k,v}$. 
    \end{itemize}
\end{corollary}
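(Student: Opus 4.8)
The plan is to prove both implications by the same ``simulation'' argument that underlies \Cref{thm:equiv_GNN_WL}, now carried out for two vertices inside a single graph $(G,H)\in\calG_{n,m}$ rather than for two separate graphs. In one direction I would show that the features produced by any $k$-hop subgraph GNN are always at least as coarse as the $k$-hop subgraph WL colors; in the other direction I would construct one particular GNN in $\calF_{k,v}$ whose features encode the WL colors exactly, exploiting that only finitely many colors and rooted subgraphs ever occur in a fixed graph.

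\textbf{From (i) to (ii).} Fix $F_v\in\calF_{k,v}$ with $L$ aggregation layers and run the $k$-hop subgraph WL test for $L$ iterations with a collision-free hash. I would prove by induction on $l$ that $C_j^{(l)}=C_{j'}^{(l)}$ implies $h_j^{(l)}=h_{j'}^{(l)}$ for all vertices $v_j,v_{j'}$. The base case uses that $C_j^{(0)}=\mathrm{HASH}(h_j)$ is collision-free, so $C_j^{(0)}=C_{j'}^{(0)}$ forces $h_j=h_{j'}$ and hence $h_j^{(0)}=h_{j'}^{(0)}$. For the inductive step, $C_j^{(l)}=C_{j'}^{(l)}$ forces, via \eqref{eq:color-refine} and collision-freeness, both $C_j^{(l-1)}=C_{j'}^{(l-1)}$ and an isomorphism $\psi$ of the rooted colored subgraphs $(G,C^{(l-1)}_{v_j,k})$ and $(G,C^{(l-1)}_{v_{j'},k})$; applying the inductive hypothesis to each pair $(v,\psi(v))$ shows $\psi$ also preserves the $h^{(l-1)}$-features, so $(G,h^{(l-1)})_{v_j,k}$ and $(G,h^{(l-1)})_{v_{j'},k}$ are isomorphic as feature-rooted graphs and $g^{(l)}$ agrees on them; combined with $h_j^{(l-1)}=h_{j'}^{(l-1)}$ and \eqref{eq:khop_GNN} this yields $h_j^{(l)}=h_{j'}^{(l)}$. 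Taking $l=L$ and applying the common read-out $r$ turns hypothesis (i), namely $C_i^{(L)}=C_{i'}^{(L)}$, into $F_v(G,H)_i=r(h_i^{(L)})=r(h_{i'}^{(L)})=F_v(G,H)_{i'}$.

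\textbf{From (ii) to (i).} I would argue the contrapositive. If (i) fails, there is an iteration count $L$ and a hash with $C_i^{(L)}\neq C_{i'}^{(L)}$; since the color partition produced by a collision-free hash refines the one produced by any other hash (a collision can only merge classes), we may assume the standard collision-free refinement and still have $C_i^{(L)}\neq C_{i'}^{(L)}$. Because $G$ has only $n$ vertices, at each layer only finitely many vertex colors and finitely many isomorphism classes of rooted $k$-hop colored subgraphs occur, so I would choose, inductively in $l$, continuous $f^{(l)}$ and $g^{(l)}$ (with $g^{(l)}$ constant on isomorphic rooted graphs) such that $v_j\mapsto h_j^{(l)}$ is in bijection with $v_j\mapsto C_j^{(l)}$: at $l=0$, interpolate $f^{(0)}$ through the finitely many points $h_1,\dots,h_n$ so that $h_j^{(0)}$ encodes $\mathrm{HASH}(h_j)$; at level $l$, let $g^{(l)}$ take distinct values on the distinct isomorphism classes among $\{(G,h^{(l-1)})_{v_j,k}\}$ (by the inductive bijection these coincide with the classes of $\{(G,C^{(l-1)}_{v_j,k})\}$), then let $f^{(l)}$ separate the finitely many pairs $\big(h_j^{(l-1)},g^{(l)}((G,h^{(l-1)})_{v_j,k})\big)$, which encode $C_j^{(l)}$ by \eqref{eq:color-refine}; finally take $r$ injective on $\{h_j^{(L)}\}$. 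For this $F_v\in\calF_{k,v}$, $C_i^{(L)}\neq C_{i'}^{(L)}$ gives $h_i^{(L)}\neq h_{i'}^{(L)}$ and hence $F_v(G,H)_i\neq F_v(G,H)_{i'}$, contradicting (ii).

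\textbf{Main obstacle.} Exactly as for \Cref{thm:equiv_GNN_WL}, the only genuinely technical point is realizing the prescribed finite data by \emph{continuous} learnable maps: one must exhibit a continuous $g^{(l)}$ on the whole space of feature-rooted graphs (a disjoint union, over graph topologies, of Euclidean feature spaces) that is, in addition, injective up to isomorphism on the finitely many rooted subgraphs that actually occur, together with globally continuous $f^{(l)}$ and $r$ that are injective on the relevant finite point sets. This is handled by routine interpolation/bump-function constructions on finite sets, and since it is the same argument that proves \Cref{thm:equiv_GNN_WL} (itself deferred in the paper to \citet{chen2022representing}), the corollary follows with only the bookkeeping above.
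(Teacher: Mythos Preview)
Your argument is correct, but it takes a different and considerably longer route than the paper's. The paper dispatches the corollary in one line: apply the ``moreover'' clause of \Cref{thm:equiv_GNN_WL} to the pair $(G,H)$ and $(\hat G,\hat H)=\sigma\ast(G,H)$, where $\sigma\in S_n$ is the transposition exchanging $i$ and $i'$. Since the WL test is equivariant, $\hat C_j^{(L)}=C_{\sigma(j)}^{(L)}$, so $(G,H)\ensuremath{\stackrel{k,v}{\sim}}\sigma\ast(G,H)$ is exactly condition~(i); and by equivariance of any $F_v\in\calF_{k,v}$, $F_v(G,H)=F_v(\sigma\ast(G,H))$ is exactly condition~(ii). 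Thus the corollary is a direct specialization of the theorem, with no need to reopen the simulation argument.

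What you do instead is re-run the two-sided simulation proof (WL $\Rightarrow$ GNN by induction on layers, GNN $\Rightarrow$ WL by building a separating network) inside a single graph. This is perfectly valid, and your handling of the technical points---in particular the observation that the collision-free hash yields the finest partition, and the interpolation of continuous $f^{(l)},g^{(l)},r$ on finitely many data---is sound. The trade-off is that your approach duplicates the work already packaged in \Cref{thm:equiv_GNN_WL}, whereas the paper's transposition trick gets the result for free once that theorem is in hand. If you were proving the corollary \emph{before} the theorem, your direct argument would be the natural one; given the theorem, the paper's reduction is preferable.
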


\begin{proof}
    Apply \Cref{thm:equiv_GNN_WL} to $(G,H)$ and $\sigma\ast(G,H)$ where $\sigma$ is the permutation that switches $i,i'$ and keep all other indices unchanged.
\end{proof}

\section{Expressive power of $k$-hop subgraph GNNs}
\label{sec:main_result}

This section presents our main results on the expressive power of $k$-hop subgraph GNNs. We consider $k=1$ in \Cref{sec:1hop_subgraphGNN} and $k\geq 2$ in \Cref{sec:khop_subgraphGNN}, and discuss an extension in \Cref{sec:extension}.

\subsection{$1$-hop subgraph GNNs}
\label{sec:1hop_subgraphGNN}

Our main theorem on $1$-hop subgraph GNNs is that they can approximate any permutation-invariant/equivariant continuous functions on graphs without cycles of length greater than $3$.

\begin{theorem}\label{thm:GNN-1hop}
    Let $\bP$ be a Borel probability measure on $\calG_{n,m}$. Suppose that for $\bP$-almost surely $(G,H)$, the graph $G$ is connected and has no cycles of length greater than $3$. Then, the following hold.
    \begin{itemize}
        \item[(i)] For any $\epsilon,\delta>0$ and any permutation-invariant continuous function $\Phi:\calG_{n,m}\to\bR$, there exists $F\in\calF_1$ such that
        \begin{equation*}
            \bP\left[|F(G,H)-\Phi(G,H)|>\delta\right] <\epsilon.
        \end{equation*}
        \item[(ii)] For any $\epsilon,\delta>0$ and any permutation-equivariant continuous function $\Phi_v:\calG_{n,m}\to\bR^n$, there exists $F_v\in\calF_{1,v}$ such that
        \begin{equation*}
            \bP\left[\|F_v(G,H)-\Phi_v(G,H)\|>\delta\right] <\epsilon.
        \end{equation*}
    \end{itemize}
\end{theorem}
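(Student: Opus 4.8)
The standard route for this kind of universal approximation result, following \citet{azizian2020expressive} and \citet{chen2022representing}, is to combine three ingredients: (1) a Stone--Weierstrass type density argument reducing everything to separation power, (2) \Cref{thm:equiv_GNN_WL} translating separation by $1$-hop subgraph GNNs into the equivalence relation $\ensuremath{\stackrel{1}{\sim}}$, and (3) a purely combinatorial statement that on connected graphs with no cycles of length $>3$, the relation $(G,H)\ensuremath{\stackrel{1}{\sim}}(\hat G,\hat H)$ already forces $(G,H)$ and $(\hat G,\hat H)$ to be isomorphic (as feature-labeled graphs), and similarly $\ensuremath{\stackrel{1,v}{\sim}}$ identifies exactly the orbits of vertices under feature-preserving automorphisms. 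Ingredient (3) is the mathematical heart of the theorem, so I would state it as a separate lemma: \emph{if $G,\hat G$ are connected, have no cycles of length greater than $3$, and $(G,H)\ensuremath{\stackrel{1}{\sim}}(\hat G,\hat H)$, then there is $\sigma\in S_n$ with $\sigma\ast(G,H)=(\hat G,\hat H)$.}

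For the combinatorial lemma I would argue as follows. A connected graph with no cycle of length $\geq 4$ is a ``cactus-like'' graph built only from edges and triangles glued at cut vertices; equivalently, every block (maximal $2$-connected subgraph) is either a single edge or a triangle. The $1$-hop subgraph WL colour $C_i^{(l)}$ refines based on the isomorphism type of the rooted $1$-hop subgraph $(G, C^{(l-1)})_{v_i,1}$, which records, for each neighbour $v_j$ of $v_i$, the colour $C_j^{(l-1)}$ \emph{together with} the adjacency pattern among the neighbours of $v_i$ --- precisely the information needed to detect which pairs of neighbours form a triangle with $v_i$. Thus the $1$-hop subgraph WL test is exactly as strong as the ordinary WL test run on an auxiliary structure that marks triangles, and the key point is that on graphs whose only cycles are triangles, this enriched WL test is complete: one reconstructs the graph by an induction on the block-cut tree, peeling off a ``leaf block'' (a pendant edge or a triangle hanging off a cut vertex) and using the colour of the cut vertex to match it up with the corresponding leaf block in $\hat G$. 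The delicate part is handling symmetric situations (several isomorphic leaf blocks at the same vertex, or matching whole branches rather than single blocks), which is why I would phrase the induction in terms of the unfolding/colour-refinement tree and invoke a back-and-forth argument à la the classical WL-completeness-on-trees proof, extended from trees to cactus graphs. A clean way to organize this: show that the stable $1$-hop subgraph colouring induces an isomorphism of the coloured block-cut trees of $G$ and $\hat G$, and that a block-cut-tree isomorphism respecting vertex colours lifts to a graph isomorphism when every block is an edge or a triangle.

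Given the lemma, part (i) follows the now-standard template. Fix $\epsilon,\delta>0$ and a permutation-invariant continuous $\Phi$. By the lemma and \Cref{thm:equiv_GNN_WL}, on the full-measure set where $G$ is connected and triangle-bounded, $F(G,H)=F(\hat G,\hat H)$ for all $F\in\calF_1$ implies $(G,H)\cong(\hat G,\hat H)$ and hence $\Phi(G,H)=\Phi(\hat G,\hat H)$; in other words, restricted to this set, $\Phi$ is a continuous function of the quotient by the $\calF_1$-separation relation. One then needs that $\calF_1$ (or a suitable subalgebra of it) is a point-separating, constant-containing subalgebra of $C(K)$ on a compact set $K$ carrying most of the mass of $\bP$ --- choose $K$ compact with $\bP(K)>1-\epsilon$ and $K$ contained (up to a further null set) in the good set --- so Stone--Weierstrass gives $F\in\calF_1$ with $\sup_{K}|F-\Phi|<\delta$, and $\bP[|F-\Phi|>\delta]\leq \bP(K^c)<\epsilon$. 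Two routine technical points: $\calG_{n,m}$ is a disjoint union of finitely many Euclidean pieces so compact $K$ with large measure exist, and one must check $\calF_1$ is closed under the algebra operations (sums and products of GNN outputs are realizable as GNN outputs, using that $f^{(l)},g^{(l)},r$ range over all continuous functions) --- this is exactly the kind of bookkeeping already done in \citet{chen2022representing}. Part (ii) is the equivariant analogue: using the ``moreover'' clause of \Cref{thm:equiv_GNN_WL} and \Cref{cor:equiv_GNN_WL} together with the lemma (which also pins down $\ensuremath{\stackrel{1,v}{\sim}}$ as the feature-preserving-automorphism orbit relation), one reduces to approximating each coordinate and assembles an equivariant GNN; alternatively one appends coordinates to vertex features to break ties and reduces to the invariant case as in \citet{azizian2020expressive}. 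The main obstacle is unquestionably the combinatorial completeness lemma --- proving that $1$-hop subgraph WL reconstructs cactus graphs --- everything else is a faithful adaptation of existing machinery.
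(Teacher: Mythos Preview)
Your overall architecture --- Stone--Weierstrass on a compact high-probability set, translation to $\ensuremath{\stackrel{1}{\sim}}$ via \Cref{thm:equiv_GNN_WL}, and a combinatorial completeness lemma asserting that $\ensuremath{\stackrel{1}{\sim}}$ coincides with isomorphism on connected graphs with no cycle of length $>3$ --- is exactly the paper's. For part~(ii) the paper invokes precisely the equivariant Stone--Weierstrass theorem of \citet{azizian2020expressive} (their Theorem~22) and verifies its four hypotheses using \Cref{thm:equiv_GNN_WL}, \Cref{cor:equiv_GNN_WL}, and a corollary of the combinatorial lemma saying that two vertices of the same stable colour are related by an automorphism; your sketch is consistent with this, though ``approximate each coordinate and assemble'' understates the care needed to stay equivariant.

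The genuine difference is in how you prove the combinatorial lemma. You go via the block--cut tree: every block is an edge or a triangle, the $1$-hop subgraph colour records the triangle structure at each vertex, hence the stable colouring determines the coloured block--cut tree, and one lifts a tree isomorphism back to the graph. The paper instead runs a direct vertex-by-vertex induction with no block decomposition: it maintains connected $S_1\subseteq V(G)$, $S_2\subseteq V(\hat G)$ and an isomorphism of the pairs $(S_1,\calN(S_1))\cong(S_2,\calN(S_2))$, then adds a boundary vertex $v_1$ (and its image $v_2$) and argues that any $u\in\calN(v_1)\setminus\calN(S_1)$ can only be adjacent to $v_1$ inside $\calN(S_1)$ --- otherwise one builds a cycle of length $\geq 4$ through $S_1$ --- and that the induced graph on $\calN(v_1)\setminus\calN(S_1)$ has maximum degree $\leq 1$, so the extension of the isomorphism is forced. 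Your structural route is arguably cleaner for $k=1$ and makes the analogy with WL-on-trees explicit; the paper's bare-hands induction is more elementary and, crucially, is what generalises to the $k\geq 2$ case (the paper's \Cref{thm:khop-WL}), where there is no block decomposition to exploit and the same $(S,\calN_k(S))$ induction goes through with a lemma ruling out edges between $\calN_k(v)\setminus\calN_k(S)$ and $\calN_k(S)\setminus\calN_k(v)$.
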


Throughout this paper, we always denote $\|\cdot\|$ as the standard $\ell_2$-norm on $\bR^n$.
We describe the main idea here and the detailed proof of \Cref{thm:GNN-1hop} is deferred to \Cref{sec:pf_1hop_subgraphGNN}. 
The classic Stone-Weierstrass theorem states that under mild conditions, a function class can universally approximate any continuous function if and only if it separates points, i.e., for any two different inputs, at least one function in that class has different outputs. Therefore, based on Stone-Weierstrass-type theorems, it suffices to show that $1$-hop subgraph GNNs have strong enough separation power to distinguish all non-isomorphic connected graphs with no cycles of length greater than $3$. Noticing the equivalence results in \Cref{sec:equivalent_separation}, one only needs to explore the separation power of $1$-hop subgraph WL test.

\begin{theorem}\label{thm:1hop-WL}
    Consider $(G,H),(\hat{G},\hat{H})\in\calG_{n,m}$. Suppose that $G$ and $\hat{G}$ are both connected and have no cycles of length greater than $3$. If $(G,H)\ensuremath{\stackrel{1}{\sim}}(\hat{G},\hat{H})$, then $(G,H)$ and $(\hat{G},\hat{H})$ must be isomorphic.
\end{theorem}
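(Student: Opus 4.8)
\textbf{Proof proposal for \Cref{thm:1hop-WL}.}

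The plan is to reconstruct the isomorphism type of $G$ from the stable coloring produced by the $1$-hop subgraph WL test, exploiting the structural rigidity imposed by the absence of cycles of length greater than $3$. A connected graph with no induced (indeed, no) cycle of length $\geq 4$ is a \emph{chordal} graph in which moreover every cycle is a triangle; such graphs are built out of triangles and edges glued along vertices, and in particular every biconnected component (block) is either a single edge or a complete graph $K_3$. Thus the block-cut tree of $G$ is a tree whose leaves are blocks, and the $1$-hop subgraph rooted at a vertex $v_i$ — i.e.\ $G$ restricted to $v_i$ and its neighbors, together with the induced edges among those neighbors — records exactly which pairs of neighbors of $v_i$ are themselves adjacent, i.e.\ which neighbors lie in a common triangle with $v_i$. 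The first step is to make this precise: show that for a graph all of whose cycles are triangles, the $1$-hop rooted subgraph at $v$ is a ``friendship-graph-like'' gadget (a union of triangles through $v$ and pendant edges at $v$), and that knowing this gadget for every $v$, compatibly, pins down $G$ up to isomorphism.

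Next I would set up the standard WL unrolling argument. Assuming $(G,H)\ensuremath{\stackrel{1}{\sim}}(\hat G,\hat H)$, run the refinement to stability; by \Cref{thm:equiv_GNN_WL} / \Cref{cor:equiv_GNN_WL} the stable color classes in each graph, and the matching between the two graphs, are well defined, and two vertices share a color iff they agree on initial feature, on the multiset of colors of neighbors, \emph{and} on the isomorphism type of the rooted $1$-hop subgraph decorated with the previous-round colors. The key point is that the $1$-hop subgraph information upgrades ordinary WL with exactly the ``which neighbors are mutually adjacent'' data, which ordinary WL lacks and which is precisely what distinguishes, e.g., the graphs of \Cref{fig:not_iso_WL}. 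I would then argue by induction on the block-cut tree (equivalently, on distance from a fixed root, peeling leaf blocks): I build an explicit isomorphism $\phi:V(G)\to V(\hat G)$ by, at each stage, picking a vertex $v$ and a color-matched $\hat v$, using the rooted-subgraph information at $v$ and $\hat v$ to match their neighborhoods triangle-by-triangle and pendant-edge-by-pendant-edge, and extending consistently. Consistency across overlaps (a vertex may be reached from several neighbors) is where the stability of the coloring is used: because colors are stable, the local matchings agree on shared vertices, so the partial isomorphism is well defined and, being a bijection preserving adjacency locally everywhere, is a graph isomorphism; that it respects features follows from $C^{(0)}$ encoding $h_i$.

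The main obstacle I anticipate is the consistency/globalization step: WL only gives us \emph{local}, color-respecting matchings of neighborhoods, and one must rule out the possibility that these local matchings cannot be patched into a single global bijection — this is exactly the phenomenon that makes WL fail on general graphs. The structural hypothesis must be used in an essential way here, presumably via the tree structure of the blocks: since the block-cut tree has no cycles, there is no ``monodromy'' obstruction to gluing local isomorphisms, so one can orient the gluing along the tree and propagate a consistent choice. Concretely I expect the argument to go: fix a base vertex $v_0$ in each graph with matching color, match its rooted subgraph, and then extend along the (cut-)tree, at each step checking that the new local matching is forced to agree with what was already fixed because the relevant vertices carry stable colors that already determine how their own neighborhoods (hence the overlap) are matched. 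A secondary technical nuisance is handling automorphisms — a triangle through $v$ admits a swap of its two other vertices, and pendant edges of equal color are interchangeable — so the isomorphism is built with choices; one must check any choice works, which it does precisely because colored WL-equivalent vertices are interchangeable by construction. I would also need the preliminary combinatorial lemma that ``connected, all cycles are triangles'' $\Rightarrow$ ``block-cut tree with blocks $K_2$ or $K_3$'', which is elementary but should be stated and proved (e.g.\ a chord-free cycle of length $\ge 4$ would be an induced $C_{\ge4}$, contradicting the hypothesis; and a $2$-connected chordal graph in which every cycle is a triangle is $K_3$).
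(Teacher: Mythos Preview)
Your proposal is correct and takes a genuinely different, more structural route than the paper. You first pin down the block decomposition of $G$ (every $2$-connected component is $K_2$ or $K_3$) and then use the tree structure of the block--cut tree to organize the induction and to argue that local neighborhood matchings glue without conflict (``no monodromy''). The paper instead runs a direct vertex-by-vertex growing argument with no mention of blocks: it maintains connected subsets $S_1\subseteq V(G)$, $S_2\subseteq V(\hat G)$ together with an isomorphism of the pairs $(S_1,\mathcal N(S_1))$ and $(S_2,\mathcal N(S_2))$, and at each step adjoins one boundary vertex $v_1$ (with image $v_2=f(v_1)$), showing by an explicit cycle-length count that every vertex of $\mathcal N(v_1)\setminus\mathcal N(S_1)$ attaches to the already-built picture only through $v_1$ and that the induced graph on these new vertices has maximum degree $1$, so the extension of $f$ is essentially forced. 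Your approach gives a cleaner conceptual explanation and recycles the classical ``WL determines rooted trees'' template; the paper's approach bypasses the block machinery entirely, which pays off when generalizing to $k\geq 2$ --- its proof of \Cref{thm:khop-WL} has the identical skeleton (grow $S_1,S_2$; control $\mathcal N_k(v_1)\setminus\mathcal N_k(S_1)$ via a cycle-length lemma), whereas for larger $k$ there is no clean analogue of ``all blocks are $K_2$ or $K_3$'' for your argument to lean on.
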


In the acyclic graph setting, it is proved in \citet{bamberger2022topological} that two trees indistinguishable by the classic WL test~\eqref{eq:WL} must be isomorphic. \Cref{thm:1hop-WL} can be viewed as a generalization of this result from~\citep{bamberger2022topological}.

The key idea in the proof of \Cref{thm:1hop-WL} is inductively constructing the isomorphism. We consider stabilized colors output by $1$-hop WL test without hash collision and start from two vertices of the same color, one from each graph. The $1$-hop subgraphs rooted at these two vertices are isomorphic, guaranteed by the same color. Then we inductively extend the subgraphs by adding neighbors of two vertices in the current subgraphs of the same color, which maintains the isomorphism, until they reach the whole graphs.

\subsection{$k$-hop subgraph GNNs with $k\geq 2$}
\label{sec:khop_subgraphGNN}

This subsection concerns the expressive power of $k$-hop subgraph GNNs for $k\geq 2$ and the main theory is an extension of \Cref{thm:GNN-1hop}, in the sense that $k$-hop subgraph GNNs ($k\geq 2$) can approximate any permutation-invariant/equivariant continuous functions on graphs without cycles of length greater than $2k+1$, but an additional assumption is required.

\begin{definition}
    A graph $(G,H)\in\calG_{n,m}$ is said to be $k$-separable if the following condition holds when the $k$-hop subgraph WL test terminates with stabilized colors and without hash collisions: For any three vertices $u, v_1, v_2$ with $d(u, v_1) = d(u, v_2) = k$ and $v_1 \neq v_2$, the colors of $v_1$ and $v_2$ are different.
\end{definition}

\begin{theorem}\label{thm:GNN-khop}
    Let $\bP$ be a Borel probability measure on $\calG_{n,m}$. Suppose that $\bP$-almost surely, $(G,H)$ is $k$-separable and $G$ is connected with no cycles of length greater than $2k+1$. Then, the following hold.
    \begin{itemize}
        \item[(i)] For any $\epsilon,\delta>0$ and any permutation-invariant continuous function $\Phi:\calG_{n,m}\to\bR$, there exists $F\in\calF_k$ such that
        \begin{equation*}
            \bP\left[|F(G,H)-\Phi(G,H)|>\delta\right] <\epsilon.
        \end{equation*}
        \item[(ii)] For any $\epsilon,\delta>0$ and any permutation-equivariant continuous function $\Phi_v:\calG_{n,m}\to\bR^n$, there exists $F_v\in\calF_{k,v}$ such that
        \begin{equation*}
            \bP\left[\|F_v(G,H)-\Phi_v(G,H)\|>\delta\right] <\epsilon.
        \end{equation*}
    \end{itemize}
\end{theorem}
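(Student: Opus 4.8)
The plan is to reuse the two-step architecture of the proof of \Cref{thm:GNN-1hop}: first reduce the approximation statements (i) and (ii) to a separation statement, and then prove the latter by an explicit inductive reconstruction of an isomorphism from the stabilized \(k\)-hop subgraph WL colors. For the reduction, fix a Borel set \(A\subseteq\calG_{n,m}\) with \(\bP(A)=1\) on which every \((G,H)\) is \(k\)-separable, connected, and has no cycle of length greater than \(2k+1\). Then the same Stone--Weierstrass-type argument used for \Cref{thm:GNN-1hop} (inner regularity of \(\bP\), density on compact subsets, and the GNN/WL equivalence of \Cref{thm:equiv_GNN_WL}), with the set \(A\) and its separation input replaced appropriately, reduces both (i) and (ii) to the claim: \emph{if \((G,H),(\hat G,\hat H)\in A\) and \((G,H)\stackrel{k}{\sim}(\hat G,\hat H)\), then \((G,H)\cong(\hat G,\hat H)\).} This is the \(k\geq 2\) analogue of \Cref{thm:1hop-WL}; the new ingredient is the \(k\)-separability hypothesis.

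To prove the claim, I would run the \(k\)-hop subgraph WL test to stabilization with a collision-free hash, obtaining stable colorings \(C\) on \(G\) and \(\hat C\) on \(\hat G\). Since the output multisets coincide, every color class has the same size in the two graphs, and a vertex's stable color determines both its original feature and the isomorphism type of its rooted colored \(k\)-hop subgraph. Fix \(u\in V(G)\) and \(\hat u\in V(\hat G)\) of equal color. As in the construction behind \Cref{thm:1hop-WL}, I would build by induction on \(r\) a color- and root-preserving isomorphism \(\phi_r:G|_{\calN_r(u)}\to\hat G|_{\calN_r(\hat u)}\) with \(\phi_{r+1}\) extending \(\phi_r\). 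For \(r\leq k\), the induced ball \(G|_{\calN_r(u)}\) agrees with the corresponding ball inside \(G|_{\calN_k(u)}\) (shortest paths from \(u\) of length at most \(k\) stay inside \(\calN_k(u)\)), so it is recovered directly from the color of \(u\). For the step past radius \(k\), each new vertex \(w\) at distance \(r+1\) from \(u\) is adjacent to some frontier vertex \(v\) at distance \(r\); its color determines \(G|_{\calN_k(v)}\) rooted at \(v\), and since \(k\geq 2\), this ball contains \(w\), all neighbors of \(w\), and every frontier vertex adjacent to \(w\). I would then transport \(w\) through a color-induced isomorphism \(G|_{\calN_k(v)}\to\hat G|_{\calN_k(\hat v)}\) to define \(\phi_{r+1}(w)\).

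The main obstacle — and the technical heart of the proof — is the \emph{consistency of this gluing}: one must show that the local isomorphism chosen at \(v\) agrees with the already-constructed \(\phi_r\) on the overlap \(\calN_k(v)\cap\calN_r(u)\), that distinct frontier vertices adjacent to the same \(w\) yield the same image, and that the resulting \(\phi_{r+1}\) is injective and edge-preserving across the entire new layer. Equivalently, one must show that the color-determined isomorphism type of \(G|_{\calN_k(v)}\) is \emph{rigid enough} to be pinned onto \(\phi_r\). This is precisely what the two hypotheses are for: the absence of cycles of length greater than \(2k+1\) forces the \(k\)-ball around \(v\) to be locally tree-like and thereby controls how it can overlap the previously reconstructed region (this is the regime in which the threshold \(2k+1\) is sharp, matching the two \((2k+2)\)-cycles versus one \((4k+4)\)-cycle obstruction of \Cref{sec:subgraph_GNN}), while \(k\)-separability ensures that the vertices at distance exactly \(k\) from \(v\) carry pairwise distinct colors, eliminating the last source of ambiguity at the boundary of the ball. (By contrast, \Cref{thm:1hop-WL} needs no separability hypothesis because the structure of a \(1\)-ball is simple enough for the same reconstruction to close without it.) The careful bookkeeping here — determining which vertices of \(\calN_k(v)\) already lie in \(\calN_r(u)\), pruning the relevant closed walks to simple cycles in order to invoke the length bound, and applying \(k\)-separability at the right place — is where the real work lies.

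Finally, since \(G\) is connected on \(n\) vertices, \(\calN_{n-1}(u)=V(G)\), and since each \(\phi_r\) is a bijection onto \(\calN_r(\hat u)\), also \(\calN_{n-1}(\hat u)=V(\hat G)\); hence \(\phi:=\phi_{n-1}\) is a color-preserving graph isomorphism \(G\to\hat G\), which, because stable colors determine original features, carries \(H\) to \(\hat H\). This proves the separation claim, and with it \Cref{thm:GNN-khop}.
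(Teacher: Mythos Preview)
Your reduction of (i) and (ii) to a separation statement via Stone--Weierstrass and \Cref{thm:equiv_GNN_WL} matches the paper exactly; the paper simply says the proof ``follows the same lines as the proof of \Cref{thm:GNN-1hop}'' once the separation statement (\Cref{thm:khop-WL}) is in hand.

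Where you diverge is in the proof of the separation claim itself. You propose to grow balls $\calN_r(u)$ around a fixed center by radius, carrying an isomorphism $\phi_r$ of $G|_{\calN_r(u)}$. The paper instead grows a connected subset $S_1\subseteq V(G)$ one vertex at a time while maintaining the stronger invariant that the \emph{pair} $(S_1,\calN_k(S_1))$ is isomorphic to $(S_2,\calN_k(S_2))$ --- i.e., the current isomorphism already extends over the full $k$-thickening of the set. This stronger invariant is what makes the step close cleanly: when a vertex $v_1\notin S_1$ adjacent to $S_1$ is adjoined, every vertex of $\calN_k(v_1)\setminus\calN_k(S_1)$ is automatically at distance exactly $k$ from $v_1$ (since $d(v_1,S_1)=1$), so $k$-separability forces the extension on these new vertices uniquely; a short lemma (\Cref{lem:cycle_2k+1}) uses the ``no cycles of length $>2k+1$'' hypothesis to rule out edges between $\calN_k(v_1)\setminus\calN_k(S_1)$ and $\calN_k(S_1)\setminus\calN_k(v_1)$, and a three-case distance check handles the remaining edges.

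Your radius-based scheme is not wrong in spirit, but the gluing difficulty you flag as ``the main obstacle'' is genuinely harder under your weaker invariant. For a frontier vertex $v$ at distance $r$ you need a color-induced isomorphism $\calN_k(v)\to\calN_k(\hat v)$ that agrees with $\phi_r$ on the overlap $\calN_k(v)\cap\calN_r(u)$; but the color of $v$ fixes only the isomorphism \emph{type} of the rooted colored ball, and $k$-separability constrains only the sphere $\{w:d(v,w)=k\}$, not the interior of the overlap. With an entire layer of frontier vertices processed at once you must also reconcile all their local choices simultaneously. The paper sidesteps this by (a) adding one vertex at a time and (b) carrying $\calN_k$ of the growing set in the invariant, so that the overlap with the new ball is already accounted for and only genuinely new vertices --- all sitting on the sphere of radius $k$ about $v_1$, exactly where $k$-separability bites --- remain to be placed. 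If you want to salvage your route, the natural fix is to strengthen your inductive invariant to track the $k$-thickening of the reconstructed region, at which point you essentially recover the paper's argument.
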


The proof of \Cref{thm:GNN-khop} follows a similar framework as \Cref{thm:GNN-1hop} and is deferred to \Cref{sec:pf_khop_subgraphGNN}, where the key step is the following theorem that is an analog of \Cref{thm:1hop-WL}.

\begin{theorem}\label{thm:khop-WL}
    Consider $k\geq 2$ and $(G,H),(\hat{G},\hat{H})\in\calG_{n,m}$ that are both $k$-separable. Suppose that $G$ and $\hat{G}$ are both connected and have no cycles of length greater than $2k+1$. If $(G,H)\ensuremath{\stackrel{k}{\sim}}(\hat{G},\hat{H})$, then $(G,H)$ and $(\hat{G},\hat{H})$ must be isomorphic.
\end{theorem}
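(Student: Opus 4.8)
The plan is to mimic the inductive isomorphism-construction strategy used for \Cref{thm:1hop-WL}, but now working with $k$-hop subgraphs instead of $1$-hop subgraphs, and using $k$-separability as the extra ingredient that controls how far the inductive frontier can "see". Fix hash-collision-free stabilized colorings $C$ and $\hat C$ of $(G,H)$ and $(\hat G,\hat H)$ produced by the $k$-hop subgraph WL test; by hypothesis the two color multisets coincide. Pick vertices $u_0\in G$ and $\hat u_0\in\hat G$ with $C_{u_0}=\hat C_{\hat u_0}$. Since the stabilized colors agree and are hash-collision-free, the rooted $k$-hop subgraphs $(G,C)_{u_0,k}$ and $(\hat G,\hat C)_{\hat u_0,k}$ are isomorphic via some root-preserving, color-preserving bijection $\phi_0$. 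I would maintain an inductively growing partial map $\phi_t\colon S_t\to\hat S_t$ between induced subgraphs $S_t\subseteq G$, $\hat S_t\subseteq\hat G$, such that $\phi_t$ is a color-preserving graph isomorphism onto its image, $S_0=\calN_k(u_0)$, and eventually $S_t=V(G)$.

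The inductive step: if $S_t\neq V(G)$, then since $G$ is connected there is a vertex $v\in S_t$ with a neighbor $w\notin S_t$. I would show $\phi_t$ can be extended to include $w$ (and its partner $\hat w$, a suitable neighbor of $\phi_t(v)$) while preserving the color-isomorphism property. The mechanism is: the fact that $v$ and $\phi_t(v)$ have the same stabilized color gives an isomorphism $\psi$ between $(G,C)_{v,k}$ and $(\hat G,\hat C)_{\phi_t(v),k}$; since $w\in\calN_k(v)$ (indeed $w\in\calN_1(v)$), $\psi$ sends $w$ to some vertex $\hat w\in\calN_k(\phi_t(v))$, necessarily a neighbor of $\phi_t(v)$ because $\psi$ preserves adjacency. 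I then set $\phi_{t+1}=\phi_t\cup\{w\mapsto\hat w\}$ on $S_{t+1}=S_t\cup\{w\}$. The two things that must be checked are (a) well-definedness — that $\hat w\notin\hat S_t$, i.e.\ we are genuinely adding a new vertex and not colliding with an already-assigned image — and (b) that \emph{all} adjacencies between $w$ and $S_t$ (not merely the edge $vw$) are matched correctly by $\phi_{t+1}$, so the extension is still an induced-subgraph isomorphism. For (b) one uses that any $S_t$-neighbor $v'$ of $w$ lies within distance $k$ of $v$ — this is exactly where the cycle-length bound $2k+1$ enters: if $w$ is adjacent to two vertices $v,v'\in S_t$, the $S_t$-internal distance between $v$ and $v'$ is at most... well, one must argue it is short enough that both $v,v'$ and the edges $wv$, $wv'$ all sit inside a single rooted $k$-hop subgraph, so that the local isomorphism $\psi$ (or an overlapping one) forces consistency; the absence of long cycles is what guarantees that two neighbors of $w$ in $S_t$ cannot be "far apart" along $S_t$ without creating a forbidden cycle through $w$. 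For (a), $k$-separability is the tool: if $\hat w$ already equals $\phi_t(v'')$ for some $v''\in S_t$, one compares distances from an appropriate earlier root; $k$-separability says two distinct vertices at distance exactly $k$ from a common vertex have distinct colors, which rules out the spurious identification of $w$ with a previously mapped vertex by tracking that $w$ would then have to be at distance $k$ from some root that already "owns" $v''$ at a different position.

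After the extension is justified, a symmetric argument ($\phi_t$ surjects onto a growing piece of $\hat G$, and $\hat G$ connected) ensures the process terminates with $S_t=V(G)$ and $\hat S_t=V(\hat G)$, and the resulting $\phi$ is a color-preserving — hence in particular a graph — isomorphism; color-preservation at the level of the initial colors $C^{(0)}$ gives preservation of the original vertex features $h_i$, so $(G,H)\cong(\hat G,\hat H)$. I would also need a base-case remark handling the trivial situations (e.g.\ $\mathrm{diam}(G)\le k$, where $S_0$ already equals $V(G)$ and the single isomorphism $\phi_0$ suffices). The main obstacle, I expect, is the combinatorial bookkeeping in step (b): making precise the claim that when $w\notin S_t$ is adjacent to several vertices of $S_t$, those vertices are mutually close \emph{within the already-built subgraph} $S_t$ — one has to rule out the possibility that the short path joining them goes outside $S_t$, and this is where both the connectivity/growth order of the construction and the no-long-cycles hypothesis must be combined carefully; a clean way is probably to choose, at each step, $w$ to be a vertex \emph{closest to $u_0$} among all of $V(G)\setminus S_t$, so that $S_t$ is always "distance-closed" around $u_0$, which makes all of $w$'s backward neighbors lie on shortest paths and keeps them within distance $k$ of one another via the cycle bound.
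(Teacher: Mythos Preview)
Your overall shape is right---grow a partial isomorphism inductively, using equality of stabilized colors to supply local $k$-hop isomorphisms---but your inductive invariant is too weak, and the gap sits exactly where the paper's argument is more careful. In step (b) you introduce a \emph{fresh} rooted isomorphism $\psi\colon\calN_k(v)\to\calN_k(\phi_t(v))$ and set $\hat w=\psi(w)$; to conclude that $\hat w$ is adjacent to $\phi_t(v')$ for every $S_t$-neighbor $v'$ of $w$, you would need $\psi$ to agree with $\phi_t$ on $S_t\cap\calN_k(v)$. Nothing in your setup forces this: the rooted $k$-hop subgraph at $v$ can have nontrivial color-preserving automorphisms permuting vertices at distance $<k$ (where $k$-separability says nothing), so $\psi$ may disagree with $\phi_t$ there. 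Your BFS-ordering fix controls where $w$'s backward neighbors sit but not how $\psi$ acts on them, so it does not close the gap.

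The paper resolves this by carrying a stronger invariant: it maintains an isomorphism $f$ on the \emph{pair} $(S_1,\calN_k(S_1))$, i.e.\ on the full $k$-thickening of the growing set, not just on $S_1$. When $S_1$ is enlarged by a vertex $v_1$ adjacent to $S_1$, one already has $v_1\in\calN_k(S_1)$, so $v_2=f(v_1)$ is determined and no fresh local isomorphism is invoked; the only genuinely new vertices entering the domain are those in $\calN_k(v_1)\setminus\calN_k(S_1)$, and these are all at distance exactly $k$ from $v_1$. This is where $k$-separability actually enters (not for the injectivity issue you flagged in (a)): it forces those new vertices to have pairwise distinct colors, so the color-preserving extension of $f$ to them is \emph{unique}. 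Checking that this forced extension respects edges then reduces to a short case analysis on $d(\cdot,v_1)\in\{k,k-1,\le k-2\}$, together with a cycle-length lemma (\Cref{lem:cycle_2k+1}) that uses the $2k+1$ bound to rule out edges between $\calN_k(v_1)\setminus\calN_k(S_1)$ and $\calN_k(S_1)\setminus\calN_k(v_1)$.
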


We remark that even restricted to $k$-separable graphs, the $k$-hop subgraph WL test \eqref{eq:color-refine} still has strictly stronger separation power compared to the classic WL test \eqref{eq:WL}. To illustrate this, we present two non-isomorphic $k$-separable graphs that can be distinguished by the $k$-hop subgraph WL test, but are, however, treated the same by the classic WL test. Let $k=3$, and consider the two graphs in Figure~\ref{fig2:counterex-WL} with initial vertex features as labeled by colors.
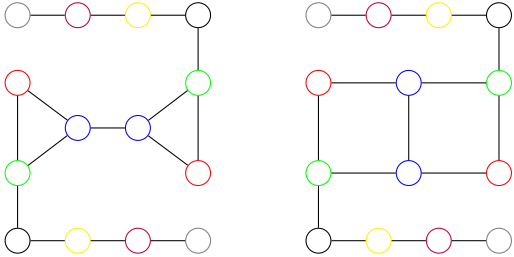
\begin{figure}[htb!]
    \centering
\begin{tikzpicture}[
roundnode/.style={circle, draw = blue, scale = 1},
roundnode1/.style={circle, draw = red, scale = 1},
roundnode2/.style={circle, draw = green, scale = 1},
roundnode3/.style={circle, draw = black, scale = 1},
roundnode4/.style={circle, draw = yellow, scale = 1},
roundnode5/.style={circle, draw = purple, scale = 1},
roundnode6/.style={circle, draw = gray, scale = 1},
]
\draw (-0.4,0) node[roundnode]        (v1)          { };
\draw (0.4,0) node[roundnode]        (v2)        { };
\draw (-1.2,0.6) node[roundnode1]        (v3)        { };
\draw (1.2,-0.6) node[roundnode1]        (v4)          { };
\draw (-1.2,-0.6) node[roundnode2]        (v5)        { };
\draw (1.2,0.6) node[roundnode2]        (v6)        { };
\draw (-1.2,-1.5) node[roundnode3]        (v7)          { };
\draw (1.2,1.5) node[roundnode3]        (v8)        { };
\draw (-0.4,-1.5) node[roundnode4]        (v9)        { };
\draw (0.4,1.5) node[roundnode4]        (v10)          { };
\draw (0.4,-1.5) node[roundnode5]        (v11)        { };
\draw (-0.4,1.5) node[roundnode5]        (v12)        { };
\draw (1.2,-1.5) node[roundnode6]        (v13)        { };
\draw (-1.2,1.5) node[roundnode6]        (v14)        { };

\draw (4,0.6) node[roundnode]        (v15)          { };
\draw (4,-0.6) node[roundnode]        (v16)        { };
\draw (2.8,0.6) node[roundnode1]        (v17)        { };
\draw (5.2,-0.6) node[roundnode1]        (v18)          { };
\draw (2.8,-0.6) node[roundnode2]        (v19)        { };
\draw (5.2,0.6) node[roundnode2]        (v20)        { };
\draw (2.8,-1.5) node[roundnode3]        (v21)          { };
\draw (5.2,1.5) node[roundnode3]        (v22)        { };
\draw (3.6,-1.5) node[roundnode4]        (v23)        { };
\draw (4.4,1.5) node[roundnode4]        (v24)          { };
\draw (4.4,-1.5) node[roundnode5]        (v25)        { };
\draw (3.6,1.5) node[roundnode5]        (v26)        { };
\draw (5.2,-1.5) node[roundnode6]        (v27)        { };
\draw (2.8,1.5) node[roundnode6]        (v28)        { };

\draw[-] (v1.0) -- (v2.180);
\draw[-] (v1.143.1) -- (v3.323.1);
\draw[-] (v2.323.1) -- (v4.143.1);
\draw[-] (v1.216.9) -- (v5.36.9);
\draw[-] (v2.36.9) -- (v6.216.9);
\draw[-] (v3.270) -- (v5.90);
\draw[-] (v4.90) -- (v6.270);
\draw[-] (v5.270) -- (v7.90);
\draw[-] (v6.90) -- (v8.270);
\draw[-] (v7.0) -- (v9.180);
\draw[-] (v8.180) -- (v10.0);
\draw[-] (v9.0) -- (v11.180);
\draw[-] (v10.180) -- (v12.0);
\draw[-] (v11.0) -- (v13.180);
\draw[-] (v12.180) -- (v14.0);

\draw[-] (v15.270) -- (v16.90);
\draw[-] (v15.180) -- (v17.0);
\draw[-] (v16.0) -- (v18.180);
\draw[-] (v15.0) -- (v20.180);
\draw[-] (v16.180) -- (v19.0);
\draw[-] (v17.270) -- (v19.90);
\draw[-] (v18.90) -- (v20.270);
\draw[-] (v19.270) -- (v21.90);
\draw[-] (v20.90) -- (v22.270);
\draw[-] (v21.0) -- (v23.180);
\draw[-] (v22.180) -- (v24.0);
\draw[-] (v23.0) -- (v25.180);
\draw[-] (v24.180) -- (v26.0);
\draw[-] (v25.0) -- (v27.180);
\draw[-] (v26.180) -- (v28.0);
\end{tikzpicture}
    \caption{Two non-isomorphic $3$-separable graphs indistinguishable by the classic WL test, but distinguishable by the $3$-hop subgraph WL test.}
    \label{fig2:counterex-WL}
\end{figure}
 Notice that neither graph has a cycle with more than $2k+1=7$ vertices. Furthermore, for any vertex $u$ in either graph, any distinct vertices $v_1$ and $v_2$ with distance exactly $3$ from $u$ are of different colors. Thus, our results imply that these two graphs can be distinguished by the $3$-hop subgraph WL test. However, we can see that the coloring on both graphs immediately stabilizes when the classic WL test is applied, so the classic WL test cannot distinguish between the graphs. Moreover, this example is non-trivial in the sense that any $3$-hop subgraph in either graph is not the entire graph.

 At the end of this subsection, let us mention some related works analyzing the separation power of the subgraph GNNs. \citet{feng2022powerful} show that the separation power of subgraph GNNs is partially stronger than the third-order WL test. More related to our work, it is proved in \citet{huangboosting} that subgraph GNNs can ``count" cycles of length up to $4$ and some variant can ``count" cycles of length up to $6$. It is worth noting that the subgraph topology is integrated in a specific way in \citet{feng2022powerful,huangboosting}, while we always assume that $g^{(l)}$ has strong enough expressive/separation power on the $k$-hop subgraphs, without fixing the structure of $g^{(l)}$.

\subsection{An extension to $k$-hop GNNs}
\label{sec:extension}

This subsection extends our theory to $k$-hop GNNs, for which the vertex feature is updated by aggregating information in $\calN_k(v_i)$ without incorporating the subgraph structure
\begin{align}
    h_i^{(l)} = f^{(l)} \bigg( h_i^{(l-1)}, \mathrm{AGGR}\Big(\Big\{\Big\{g^{(l)}\big(h_j^{(l-1)},d(v_i,&v_j)\big) \notag \\
    : v_j\in \calN_k(v_i)\Big\}\Big\}\Big) \bigg) &, \label{eq:agg_func}
\end{align}
where $f^{(l)}$ and $g^{(l)}$ are learnable continuous functions.
The initial embedding layer and the final output layer of $k$-hop GNNs are the same as $k$-hop subgraph GNNs. 

\begin{definition}[Spaces of $k$-hop GNNs]
    We use $\calF_k'$ to denote the collection of all $k$-hop GNNs with graph-level output, and use $\calF_{k,v}'$ to denote the collection of all $k$-hop GNNs with vertex-level output.
\end{definition}

It is clear that the implementation of $k$-hop GNNs is cheaper than $k$-hop subgraph GNNs as some topology information about the subgraph is dropped. The trade-off is that $k$-hop GNNs have a bit weaker expressive/separation power. To rigorously introduce our next theorem on the the expressive power of $k$-hop GNNs, we require the associated $k$-hop WL test implements the color refinement as follows: 
\begin{equation*}
    \begin{split}
        C^{(l)}(v_i) = \mathrm{HASH}\bigg( C^{(l-1)}(v_i),\qquad\qquad\qquad\qquad\qquad & \\
        \Big\{\Big\{\big(C^{(l-1)}(v_j), d(v_i,v_j) \big): v_j\in \calN_k(v_i)\Big\}\Big\}\bigg).&
    \end{split}
\end{equation*}

\begin{definition}
    A graph $(G,H)\in\calG_{n,m}$ is said to be $k$-strongly separable if the following condition holds when the $k$-hop WL test terminates with stabilized colors and without hash collisions: For any two vertices $v_1, v_2$ with $d(v_1, v_2) \leq 2k$, the colors of $v_1$ and $v_2$ are different.
\end{definition}

Our main result in this subsection is that $k$-hop GNNs can universally approximate any permutation-invariant/equivariant continuous functions on $k$-strongly separable graphs with no cycles of length greater than $2k-1$.

\begin{theorem}\label{thm2:GNN-khop}
    Let $k\geq 2$ and let $\bP$ be a Borel probability measure on $\calG_{n,m}$. Suppose that $\bP$-almost surely, $(G,H)$ is $k$-strongly separable and $G$ is connected with no cycles of length greater than $2k-1$. Then, the following hold.
    \begin{itemize}
        \item[(i)] For any $\epsilon,\delta>0$ and any permutation-invariant continuous function $\Phi:\calG_{n,m}\to\bR$, there exists $F\in\calF_k'$ such that
        \begin{equation*}
            \bP\left[|F(G,H)-\Phi(G,H)|>\delta\right] <\epsilon.
        \end{equation*}
        \item[(ii)] For any $\epsilon,\delta>0$ and any permutation-equivariant continuous function $\Phi_v:\calG_{n,m}\to\bR^n$, there exists $F_v\in\calF_{k,v}'$ such that
        \begin{equation*}
            \bP\left[\|F_v(G,H)-\Phi_v(G,H)\|>\delta\right] <\epsilon.
        \end{equation*}
    \end{itemize}
\end{theorem}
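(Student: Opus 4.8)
The plan is to follow the same three-layer strategy used for \Cref{thm:GNN-1hop} and \Cref{thm:GNN-khop}: (i) a Stone--Weierstrass reduction of universal approximation to separation power; (ii) an equivalence between the separation power of $\calF_k'$ (resp.\ $\calF_{k,v}'$) and that of the $k$-hop WL test; and (iii) a purely combinatorial completeness statement for the $k$-hop WL test on the relevant family of graphs. Steps (i) and (ii) are routine adaptations of arguments already in the paper: one first introduces the indistinguishability relation associated to the $k$-hop WL color refinement and proves the analog of \Cref{thm:equiv_GNN_WL} for $\calF_k'$, $\calF_{k,v}'$ (the proof follows the same lines as that of \citet{chen2022representing}, since the update \eqref{eq:agg_func} is again a continuous permutation-invariant read-out of a multiset), and then applies a Stone--Weierstrass-type theorem on a compact $K\subseteq\calG_{n,m}$ with $\bP(K)>1-\epsilon$, exactly as in \Cref{sec:pf_1hop_subgraphGNN}. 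Thus the theorem reduces to the following analog of \Cref{thm:khop-WL}: \emph{if $(G,H)$ and $(\hat G,\hat H)$ are both $k$-strongly separable, both connected with no cycle of length greater than $2k-1$, and indistinguishable by the $k$-hop WL test, then they are isomorphic.}

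To prove this completeness result I would mimic the inductive isomorphism construction sketched after \Cref{thm:1hop-WL}. Fix stabilized colorings produced by a collision-free hash, pick $u_0\in G$ and $\hat u_0\in\hat G$ of the same color (possible since the color multisets agree), and grow a partial map $\phi$ between induced subgraphs $S\subseteq G$ and $\hat S\subseteq\hat G$ that preserves colors and adjacency, extending $S$ by breadth-first search from $u_0$. The crucial difference from the subgraph-GNN setting is that the stabilized color of a vertex $v$ no longer encodes the topology of $G|_{\calN_k(v)}$, only the multiset $\{\{(C(v_j),d(v,v_j)):v_j\in\calN_k(v)\}\}$. Here $k$-strong separability does the work: since any two distinct vertices within distance $2k$ of each other carry distinct colors, a vertex's color acts as a global identifier inside its $2k$-ball, so from $v$'s multiset one recovers the exact set of vertices at each distance $\le k$ from $v$ (each named by its unique color), and for any two of them one recovers their mutual distance from the multiset of an appropriate endpoint. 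In particular adjacency of two vertices at distance $\le k$ is read off directly, so equality of colors at $v$ and $\phi(v)=\hat v$ forces an isomorphism of their colored distance-$\le k$ neighborhoods; this is what lets the extension step go through (pick a neighbor $w$ of some $v\in S$, locate the unique same-color neighbor $\hat w$ of $\hat v$, and verify that $\phi\cup\{w\mapsto\hat w\}$ is still an isomorphism, together with consistency when $w$ is reached through several vertices of $S$).

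The restriction to cycles of length at most $2k-1$ (rather than $2k+1$) is what compensates for the lost topology. I expect to use it to guarantee that within the radius-$k$ neighborhoods one manipulates, shortest paths are rigid enough that (a) distances measured from different basepoints are mutually consistent and (b) no adjacency between two vertices at distance $k$ from a common vertex can be hidden---a configuration a cycle of length $2k$ or $2k+1$ could create and that the distance labels alone could not detect, whereas a genuine $k$-hop subgraph GNN would see the offending edge. Establishing this rigidity and pushing it through the induction, so that the extension never omits a genuine edge nor introduces a spurious one, is the step I expect to be the main obstacle; the Stone--Weierstrass and GNN$\leftrightarrow$WL parts are routine given the earlier sections.
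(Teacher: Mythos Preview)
Your overall plan---reduce approximation to WL completeness via a Stone--Weierstrass argument and a GNN/WL equivalence for $\calF_k'$---matches the paper, and the paper likewise identifies \Cref{thm2:khop-WL} as the only substantive step. Where you diverge is in how you attack that step. You propose to rerun the inductive isomorphism construction of \Cref{thm:1hop-WL}/\Cref{thm:khop-WL} directly for the $k$-hop WL test, using $k$-strong separability to name vertices by their colors and the cycle bound $2k-1$ to rule out ambiguous edges at the boundary; this is viable but, as you yourself flag, the boundary analysis (adjacency between two vertices at distance exactly $k$ from the basepoint, consistency when the new vertex is reached from several vertices of $S$) is genuine work and would require an analog of \Cref{lem:cycle_2k+1}. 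The paper instead gives a one-line reduction: since every pair of vertices in $\calN_k(v)$ carries distinct stabilized $k$-hop WL colors (by $k$-strong separability), and since every neighbor of any $u\in\calN_{k-1}(v)$ lies in $\calN_k(v)$, adjacency between any two vertices of $\calN_{k-1}(v)$ is determined by their colors alone---so the stabilized $k$-hop WL coloring already encodes the full $(k-1)$-hop \emph{subgraph} and is therefore stable under the $(k-1)$-hop subgraph WL refinement. One then simply invokes \Cref{thm:1hop-WL} (for $k=2$) or \Cref{thm:khop-WL} (for $k\ge 3$, noting that $k$-strong separability trivially implies $(k-1)$-separability) with cycle bound $2(k-1)+1=2k-1$. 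Your route buys self-containment; the paper's buys a clean bootstrap that sidesteps the boundary obstacle entirely.
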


Similarly, the proof of \Cref{thm2:GNN-khop} is built on that the $k$-hop WL test can distinguish all non-isomorphic $k$-strongly separable graphs.

\begin{theorem}\label{thm2:khop-WL}
    Let $k\geq 2$ and consider $k$-strongly separable graphs $(G,H),(\hat{G},\hat{H})\in\calG_{n,m}$. Suppose that $G$ and $\hat{G}$ are both connected and have no cycles of length greater than $2k-1$. If $(G,H)$ and $(\hat{G},\hat{H})$ are indistinguishable by the $k$-hop WL test, then they must be isomorphic.
\end{theorem}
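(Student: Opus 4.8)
\textbf{Proof proposal for \Cref{thm2:khop-WL}.}

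The plan is to mimic the proof strategy of \Cref{thm:khop-WL} (and \Cref{thm:1hop-WL}), constructing an isomorphism between $(G,H)$ and $(\hat G,\hat H)$ by inductively growing a matched pair of subgraphs, but now using only the weaker ``distance-tagged neighborhood'' information available to the $k$-hop WL test together with the $k$-strong separability hypothesis to compensate. First I would fix stabilized colorings produced by the $k$-hop WL test on both graphs, assuming no hash collisions, so that two vertices $v_i,\hat v_j$ share a color iff their tagged neighborhoods $\{\{(C^{(\infty)}(v),d(v_i,v)):v\in\calN_k(v_i)\}\}$ agree. Since $(G,H)\ensuremath{\stackrel{k}{\sim}}(\hat G,\hat H)$, pick a color class present in both; choose $u_0\in G$ and $\hat u_0\in\hat G$ of that color, and let $\phi$ be the partial map sending $u_0\mapsto\hat u_0$. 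The key point of $k$-strong separability is that within any ball of radius $k$ (in fact within distance $2k$), all vertices have distinct colors; hence the equality of tagged $k$-neighborhood multisets at $u_0$ and $\hat u_0$ forces a \emph{unique} color-preserving bijection $\psi_0:\calN_k(u_0)\to\calN_k(\hat u_0)$ that also preserves distance-to-the-root. This is the anchor of the induction.

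Next I would set up the induction: maintain a partial isomorphism $\phi: S\to\hat S$ between induced subgraphs, where $S\supseteq\calN_k(u_0)$ is a connected ``explored'' region, such that $\phi$ preserves vertex features/colors and edges, and such that for every already-matched vertex $w\in S$ the tagged $k$-neighborhood data around $w$ has been used to pin down $\phi$ on $\calN_k(w)$. To extend, take a boundary vertex $w\in S$ with an unexplored neighbor $x$ at distance $\le k$ from some matched vertex; using the color of $w$ (equal to that of $\phi(w)$) and matching the tagged $k$-neighborhood multisets, together with $k$-strong separability (distinct colors within distance $2k$, so no ambiguity in the matching across overlapping balls), extend $\phi$ to $x$ consistently, and verify that all edges incident to $x$ within the explored region are respected. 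Here the bound on cycle length, ``no cycles of length $>2k-1$'', is what guarantees that two explored vertices $w_1,w_2$ that both ``see'' $x$ within distance $k$ cannot disagree about which vertex $x$ maps to: any such disagreement would produce a short closed walk, and analyzing it (as in the proof of \Cref{thm:khop-WL}, but with $2k+1$ replaced by $2k-1$ because the $k$-hop GNN lacks subgraph-edge information and so one effectively loses a unit of ``reach'') yields a cycle that is too long, a contradiction. I would also need to check that the process terminates having covered all of $G$ (connectedness) and that $\phi$ is surjective onto $\hat G$ (symmetric argument, or a counting argument since $|G|=|\hat G|=n$).

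The main obstacle is the consistency/well-definedness step in the induction: showing that when a new vertex $x$ is reachable within distance $k$ from several already-matched vertices, all of them prescribe the same image $\hat x$, and moreover that the induced-subgraph edges (not just distances) are correctly transported even though the $k$-hop WL test never directly sees the subgraph topology. Overcoming this is precisely where $k$-strong separability (all colors distinct up to distance $2k$) and the cycle bound $2k-1$ must be combined: strong separability turns the multiset-equality of tagged neighborhoods into genuine pointwise bijections with no repeated colors to confuse, and the cycle bound rules out the ``two paths of total length $\le 2k$ meeting at $x$ but disagreeing'' configurations by forcing them to coincide (a discrepancy would close up into a forbidden long cycle). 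Once that lemma is in place, edges are recovered because an edge $\{x,y\}$ with $x,y$ both within distance $k$ of a common matched vertex $w$ appears in the tagged data seen from $w$ — its two endpoints are at distance $1$ from each other and within the ball — so $\phi$ preserves it; and a careful bookkeeping of which balls have been fully matched shows every edge of $G$ eventually falls into such a situation. I expect the remainder (initialization, termination, and the final ``$\phi$ is an isomorphism'' verification) to be routine given the analogous arguments already carried out for \Cref{thm:1hop-WL} and \Cref{thm:khop-WL}.
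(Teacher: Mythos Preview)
Your inductive plan is sound in outline, but the edge-recovery step has a genuine gap. You claim that an edge $\{x,y\}$ with $x,y\in\calN_k(w)$ ``appears in the tagged data seen from $w$.'' It does not: the $k$-hop WL update at $w$ records only the multiset $\{\{(C(v),d(w,v)):v\in\calN_k(w)\}\}$, which carries no information about adjacencies \emph{among} the $v$'s. This is precisely the respect in which the $k$-hop test is weaker than the $k$-hop subgraph test, and your argument as written does not bridge it; the phrase ``its two endpoints are at distance $1$ from each other and within the ball'' describes a fact about $G$, not something encoded in $w$'s tagged neighborhood.

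The missing idea is to read adjacencies from the endpoints' own colors rather than from the center $w$. The stable $k$-hop WL color of $x$ determines the multiset of colors of its distance-$1$ neighbors. If $x\in\calN_{k-1}(w)$, then every neighbor of $x$ lies in $\calN_k(w)$; by $k$-strong separability all vertices of $\calN_k(w)$ carry pairwise distinct colors (any two are at mutual distance $\le 2k$), so ``$x$ has a neighbor colored $C(y)$'' is equivalent to ``$x$ is adjacent to $y$.'' Hence the colors alone pin down the full induced subgraph on $\calN_{k-1}(w)$, and two vertices of the same $k$-hop WL color must have isomorphic rooted $(k-1)$-hop subgraphs. The paper uses exactly this observation and then simply invokes \Cref{thm:1hop-WL}/\Cref{thm:khop-WL} at radius $k-1$ (note $2(k-1)+1=2k-1$ matches the cycle bound), rather than rerunning the induction from scratch. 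Your direct construction can be salvaged by inserting this edge-recovery lemma, but once you have it the reduction is both shorter and sidesteps the consistency bookkeeping you describe.
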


We defer the detailed proof to \Cref{sec:pf_khop_GNN}. It is worth noting that when $k=1$, \Cref{thm2:khop-WL} holds even without the $1$-strong separability assumption. This is because that $1$-hop WL test coincides with the classic WL test and it is proved in \citet{bamberger2022topological} that the classic WL test can distinguish any non-isomorphic trees. In this sense, \Cref{thm2:khop-WL} can be regarded as an extension of \citet{bamberger2022topological}. Another related work is \citet{feng2022powerful} that proves that the separation power of the $k$-hop WL test is upper bounded by that of the third-order WL test. 

We also comment that Theorem~\ref{thm2:khop-WL} does not hold true if the $k$-strong separability condition is removed. Consider the two graphs in Figure~\ref{fig3:counterex-WL}, in which all vertices have the same initial feature.
\begin{figure}[htb!]
    \centering
\begin{tikzpicture}[
roundnode/.style={circle, draw = blue, scale = 1},
]
\draw (-2.82,1.2472) node[roundnode]        (v1)          { };
\draw (-3.54,0) node[roundnode]        (v2)        { };
\draw (-2.82,-1.2472) node[roundnode]        (v3)        { };
\draw (-1.38,-1.2472) node[roundnode]        (v4)          { };
\draw (-0.66,0) node[roundnode]        (v5)        { };
\draw (-1.38,1.2472) node[roundnode]        (v6)        { };
\draw (1.38,1.2472) node[roundnode]        (v7)          { };
\draw (0.66,0) node[roundnode]        (v8)        { };
\draw (1.38,-1.2472) node[roundnode]        (v9)        { };
\draw (2.82,-1.2472) node[roundnode]        (v10)          { };
\draw (3.54,0) node[roundnode]        (v11)        { };
\draw (2.82,1.2472) node[roundnode]        (v12)        { };

\draw[-] (v1.240) -- (v2.60);
\draw[-] (v2.300) -- (v3.120);
\draw[-] (v3.0) -- (v4.180);
\draw[-] (v4.60) -- (v5.240);
\draw[-] (v5.120) -- (v6.300);
\draw[-] (v6.180) -- (v1.0);
\draw[-] (v1.300) -- (v4.120);
\draw[-] (v2.0) -- (v5.180);
\draw[-] (v3.60) -- (v6.240);

\draw[-] (v7.330) -- (v11.150);
\draw[-] (v8.30) -- (v12.210);
\draw[-] (v9.90) -- (v7.270);
\draw[-] (v10.150) -- (v8.330);
\draw[-] (v11.210) -- (v9.30);
\draw[-] (v12.270) -- (v10.90);
\draw[-] (v7.300) -- (v10.120);
\draw[-] (v8.0) -- (v11.180);
\draw[-] (v9.60) -- (v12.240);
\end{tikzpicture}
    \caption{The $k$-strong separability assumption is necessary in \Cref{thm2:GNN-khop}}
    \label{fig3:counterex-WL}
\end{figure}
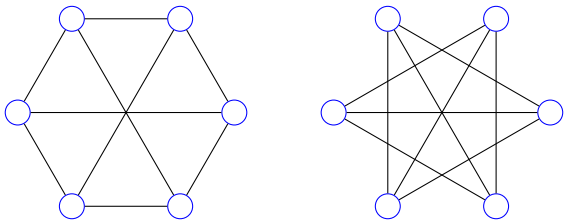
Each vertex has three neighbors of distance $1$, two neighbors of distance $2$, and no neighbors of higher distance, so all vertices would have the same color in any $k$-hop WL test for any positive integer $k$. Thus, these two non-isomorphic graphs cannot be distinguished by $k$-hop WL test for any $k\geq 1$, which does not contradicts \Cref{thm2:khop-WL} as these two graphs are clearly not $k$-strongly separable. This example also illustrates that it is impossible to prove the conclusion of \Cref{thm2:khop-WL} without any condition.

\section{Numerical Experiments}
\label{sec:numerics}

\subsection{Experimental Setting and Dataset}

Based on our theoretical discoveries, we implement a family of models, namely $k$-hop Graphormers, following the Graphormer backbone and their official implementation~\citep{ying2021transformers}.
The numerical experiment aims to validate our theoretical findings on an established graph learning benchmark using a state-of-the-art neural network architecture.
All experiments are performed on a server with Intel 6230R CPU, single 2080Ti GPU, and 512GB RAM.

\begin{figure}[tb!]
    \centering
    \vspace{-5pt}
    \includegraphics[width=0.9\linewidth]{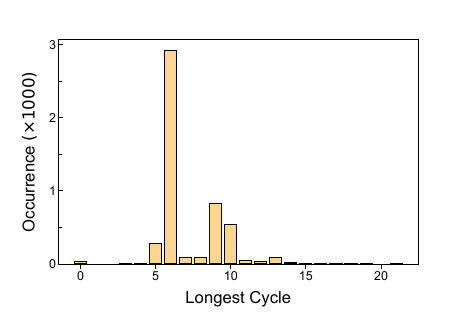}
    \vspace{-20pt}
    \caption{Statistics on the longest cycle length across all testing molecules in the ZINC dataset indicate that most molecules have a longest cycle of 6, aligning with the chemical intuition that 6-membered rings are particularly stable. The peaks observed at 9 and 10 also support the prevalence of common fused ring systems.
    }
    \vspace{-10pt}
    \label{fig:zinc-full-cycle}
\end{figure}

We test our model on the ZINC graph-learning dataset, a subset of the ZINC database with 250,000 molecules developed by \citet{gomez2018automatic}. 
On this dataset, the aim of the machine learning community mainly focuses on predicting the water-octanol partition coefficient (logP, \citet{wildman1999prediction}) by neural networks, whereby the ground truth labels are computed with cheminformatics tools. As shown in \Cref{fig:zinc-full-cycle}, most graphs in the ZINC testing dataset have a longest cycle of 6. Given these findings, and considering our theoretical bounds on GNN expressiveness for cycle sizes no more than $2k-1$ or $2k+1$, we anticipate a notable performance boost around $k=3$ for our $k$-hop Graphormers.

In line with peer methods, we report the mean absolute error (MAE) between GNN predictions and ground-truth labels.


\begin{figure}[tb!]
    \centering
    \includegraphics[width=0.9\linewidth]{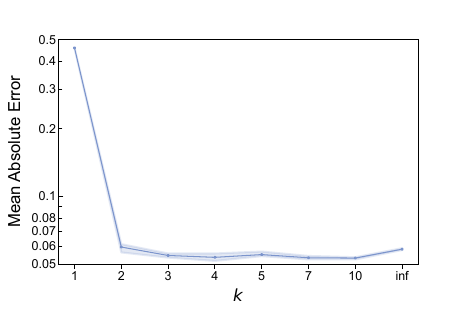}
    \vspace{-20pt}
    \caption{Mean absolute error (MAE) on ZINC testing dataset w.r.t.\ different values of $k$ for our $k$-hop Graphormer. Aligned with our theoretical results and dataset statistics in Fig.~\ref{fig:zinc-full-cycle}, we see a significant performance boost from $k=1$ to $3$, and the performance gets almost saturated for larger $k$s. All results are obtained with 3 random seeds.}
    \vspace{-10pt}
    \label{fig:zinc-full-mae}
\end{figure}

\subsection{$k$-hop Graphormer Implementation Details}
Our implementation of $k$-hop Graphormer follows the architecture proposed by \citet{ying2021transformers}. Graphormer replaces edge-wise message-passing with attention layers, whereby the connectivity information is incorporated into attention weights, which are initialized with pairwise shortest path distance. We modify its attention mechanism to fit Graphormers with our $k$-hop theory, by masking out all attention paths that are beyond the $k$-hop neighbors of the aggregated node. This implementation is in line with the aggregation function in \eqref{eq:agg_func} and $d(v_i,v_j)$ is the shortest path distance. Graphormer could be viewed as a special case of $k$-hop Graphormer by setting $k = \infty$. 
We reimplement and test all Graphormer models with the following default hyperparameters: Graphormer-Slim backbone, Adam with $2\times10^{-4}$ starting learning rate and ending $10^{-9}$, 12 Transformer layers, 80 hidden dimensions, 8 attention heads, 60,000 warm-up steps, and 400,000 total steps.

\subsection{Results and Discussions}
In experiments, we evaluate our $k$-hop Graphormers with $k=1,2,3,4,5,7,10$, and infinity i.e.\ feature could be aggregated from all nodes by attention.
As shown in \Cref{fig:zinc-full-mae}, $1$-hop Graphormer performs relatively inferior, and the performance improves significantly with larger $k$ and reaches a consistently good performance after $k>3$, validating our theoretical discovery that a $k$-hop Graphormer has the expressiveness to learn graphs with longest cycles of $\sim2k$. The performance becomes nearly saturated between $k=3$ and $10$, which is within expectation as the marginal improvements in covered graphs are less significant if read together with Figure~\ref{fig:zinc-full-cycle}. We also see a slight performance drop in the special case of $k=\infty$, indicating that too many message paths may introduce more noise than information to the model performance.
A smaller $k$ also brings potential efficiency improvements as it has fewer message-passing paths than an infinite $k$. It is not obvious for the ZINC dataset but will become valuable for larger-sized graphs.

\begin{table}[t]
\caption{Mean Absolute Error (MAE) on ZINC test set.}
  \vspace{2px}
  \label{tab:zinc}
  \small
  \centering
  \resizebox{\linewidth}{!}
  { \renewcommand{\arraystretch}{1.0}
    \begin{tabular}{clc}
    \toprule
    Aggregation & Model & Test MAE\\\midrule
    \multirow{4}{*}{1-hop} & GIN~\citep{xu2019powerful} & 0.088$\pm$0.002 \\
    & GraphSAGE~\citep{hamilton2017inductive} & 0.126$\pm$0.003 \\
    & GAT~\citep{velivckovic2018graph} & 0.111$\pm$0.002 \\
    & \textbf{$k$-hop Graphormer} ($k=1$)  & 0.459$\pm$0.001\\
    \midrule
    \multirow{2}{*}{$k$-hop} & \textbf{$k$-hop Graphormer} ($k=4$)  & 0.054$\pm$0.002\\
     & \textbf{$k$-hop Graphormer} ($k=10$)  & 0.053$\pm$0.001\\
    \midrule
    \multirow{2}{*}{full-graph} & Graphormer~\citep{ying2021transformers}  & 0.058$\pm$0.005 \\ 
     & \textbf{$k$-hop Graphormer} ($k=\infty$)  & 0.058$\pm$0.005\\
     \bottomrule
    \vspace{-20pt}
    \end{tabular}
    }
\end{table}

When compared with other 1-hop aggregation methods (traditional message-passing GNNs), as shown in Table~\ref{tab:zinc}, our 1-hop Graphormer has a similar MAE in scale but is inferior to 1-hop GNNs. With an appropriate $k$ that reflects the distribution of cycle lengths in data, $k$-hop Graphormers reach a significant improvement. Our model degenerates to the original Graphormer when $k\rightarrow \infty$.
It is also worth noting that this benchmark is nearly saturated, for example, \citet{zhang2023rethinking} improves performance with a thorough search of the hyperparameter space but their official implementation does not include the hyperparameters to reproduce the reported results. Considering all of these, we believe it is beyond the scope of this paper to develop new state-of-the-art. Therefore, our main focus of experiments is to validate our theoretical results, i.e., the correlation between $k$ values and $k$-hop Graphormers' performance.



\section{Conclusion}
\label{sec:conclude}

This paper rigorously evaluates the efficiency of GNNs that leverage subgraph structures, particularly on graphs with bounded cycles, which represent many real-world datasets. In particular, we prove that $k$-hop subgraph GNNs can reliably predict properties of graphs without cycles of length greater than $2k+1$, which is unconditionally if $k=1$ and requires an additional assumption for $k\geq 2$. The theory is extended to $k$-hop GNNs without considering the subgraph structure for graphs with no cycles of length greater than $2k-1$. The correlation between $k$-hop GNNs and $\sim2k$ cycle size is further validated by numerical experiments.

Let us also comment on the limitations of the current work. Firstly, it is unclear whether the $k$-separability in \Cref{thm:GNN-khop} can be removed or not. Secondly, though examples in Figure~\ref{fig3:counterex-WL} illustrate that \Cref{thm2:GNN-khop} cannot hold unconditionally, it remains unknown whether the $k$-strong separability can be weakened or what the weakest assumption is. Those directions deserve future research.

\section*{Acknowledgements}
Z. Chen and Q. Zhang thank the MIT PRIMES-USA program for making this paper possible.

\nocite{langley00}

\bibliography{references}
\bibliographystyle{icml2025}

\newpage
\appendix
\onecolumn
\section{Proofs for Section~\ref{sec:1hop_subgraphGNN}}
\label{sec:pf_1hop_subgraphGNN}

The proof of \Cref{thm:GNN-1hop} is based on \Cref{thm:1hop-WL} and its corollary.

\begin{corollary}\label{cor:1hop-WL}
    Consider any $(G,H)\in\calG_{n,m}$ where $G$ is connected and has no cycles of length greater than $3$. Let $\{\{C_i^{(L)}:i\in\{1,2,\dots,n\}\}\}$ be the color multiset output by the $1$-hop subgraph WL test. For any $i,i'\in \{1,2,\dots,n\}$, if $C_i^{(L)} = C_{i'}^{(L)}$ holds for any $L>0$ and any hash function, then we have for any permutation-equivariant function $\Phi:\calG_{n,m}\to\bR^n$ that $\Phi(G,H)_i = \Phi(G,H)_{i'}$.
\end{corollary}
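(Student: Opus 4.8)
The plan is to reduce Corollary~\ref{cor:1hop-WL} to a statement about indistinguishability by the $1$-hop subgraph WL test, and then invoke Theorem~\ref{thm:1hop-WL}. First I would apply Corollary~\ref{cor:equiv_GNN_WL}: since $C_i^{(L)} = C_{i'}^{(L)}$ for every $L$ and every hash function, we get that $F_v(G,H)_i = F_v(G,H)_{i'}$ for every $F_v \in \calF_{1,v}$. The goal is to upgrade this from ``all $1$-hop subgraph GNNs with vertex-level output'' to ``all permutation-equivariant continuous functions,'' which is exactly the kind of statement that follows from the universal approximation result (Theorem~\ref{thm:GNN-1hop}(ii))—but I must be careful about the circularity, since Corollary~\ref{cor:1hop-WL} is stated as a tool \emph{towards} proving Theorem~\ref{thm:GNN-1hop}. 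So the cleaner route is to go through the isomorphism statement directly.

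Here is the approach I would actually carry out. Let $\sigma \in S_n$ be the transposition swapping $i$ and $i'$, and consider the two graphs $(G,H)$ and $(\hat G, \hat H) := \sigma \ast (G,H)$. I claim $(G,H) \ensuremath{\stackrel{1}{\sim}} (\hat G, \hat H)$. Indeed, the color multiset is permutation-invariant under relabeling, so the two graphs trivially have the same final color multiset for every $L$ and every hash function; hence $(G,H)\ensuremath{\stackrel{1}{\sim}}(\hat G,\hat H)$ holds regardless. That alone is too weak—it does not use the hypothesis $C_i^{(L)} = C_{i'}^{(L)}$. The hypothesis is what we need to produce a \emph{specific} isomorphism mapping $i \mapsto i'$. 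So instead I would revisit the proof of Theorem~\ref{thm:1hop-WL}: that proof constructs an isomorphism $\phi$ between two connected, short-cycle graphs by starting from a pair of equal-colored vertices (one in each graph) and inductively extending. Applying this construction to $(G,H)$ and $\hat G = \sigma\ast(G,H)$, starting from vertex $i$ in the first copy and vertex $i$ in the second copy (which corresponds to $i'$ in the original labeling, and which has the same color as $i$ by hypothesis), yields an isomorphism $\phi : (G,H) \to \sigma\ast(G,H)$ with $\phi(i) = i$. Composing with $\sigma$ gives an automorphism $\psi := \sigma \circ \phi$ of $(G,H)$ with $\psi(i) = \sigma(i) = i'$.

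Once I have an automorphism $\psi$ of $(G,H)$ with $\psi(i) = i'$, the conclusion is immediate: for any permutation-equivariant $\Phi : \calG_{n,m} \to \bR^n$, equivariance gives $\Phi(\psi \ast (G,H)) = \psi(\Phi(G,H))$, but $\psi \ast (G,H) = (G,H)$ since $\psi$ is an automorphism, so $\Phi(G,H) = \psi(\Phi(G,H))$, and reading off the $i'$-th coordinate yields $\Phi(G,H)_{i'} = \psi(\Phi(G,H))_{i'} = \Phi(G,H)_{\psi^{-1}(i')} = \Phi(G,H)_i$. Here I do not even need continuity of $\Phi$; equivariance suffices.

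The main obstacle is the step asserting that the inductive isomorphism construction from the proof of Theorem~\ref{thm:1hop-WL} can be \emph{seeded} at a prescribed pair of equal-colored vertices, rather than merely producing \emph{some} isomorphism. I would need to check that the proof of Theorem~\ref{thm:1hop-WL} is written (or can be trivially rephrased) to show: if $v$ in $G$ and $\hat v$ in $\hat G$ have the same stabilized $1$-hop WL color, then there is an isomorphism sending $v$ to $\hat v$. Inspecting the sketch given in the excerpt—``start from two vertices of the same color, one from each graph \dots\ inductively extend \dots\ until they reach the whole graphs''—this is precisely what that proof does, so the seeding is available essentially for free; the base point is an input to the construction, not an output. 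If for some reason one wanted to avoid re-opening that proof, the alternative is to first establish Theorem~\ref{thm:GNN-1hop}(ii) and then quote it directly against $(G,H)$ and $\sigma\ast(G,H)$, but that reverses the intended logical order, so the automorphism route above is preferable.
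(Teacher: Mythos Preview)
Your proposal is correct and follows essentially the same route as the paper: both arguments observe that the inductive construction in the proof of Theorem~\ref{thm:1hop-WL} can be seeded at the equal-colored pair $(i,i')$ to produce an automorphism of $(G,H)$ sending $i$ to $i'$, after which permutation-equivariance of $\Phi$ finishes the job. The only cosmetic difference is that you pass through the relabeled graph $\sigma\ast(G,H)$ and then compose back, whereas the paper applies the construction directly to two copies of $(G,H)$ and obtains the automorphism in one step.
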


We will postpone the proofs of \Cref{thm:1hop-WL} and \Cref{cor:1hop-WL} and first prove Theorem~\ref{thm:GNN-1hop} (i) using \Cref{thm:1hop-WL} and the Stone-Weierstrass theorem.

\begin{proof}[Proof of Theorem~\ref{thm:GNN-1hop} (i)]
    There exists a compact and permutation-invariant subset $X\subseteq \calG_{n,m}$ such that $\bP[X]>1-\epsilon$ and that for any $(G,H)\in X$, $G$ is connected and has no cycles of length greater than 3. Due to Theorem~\ref{thm:1hop-WL} and the permutation-invariant property of $\Phi$, $\Phi|_X:X\to\bR$ induces a continuous map on the quotient space $\widetilde{\Phi|_X}: X/\ensuremath{\stackrel{1}{\sim}} \to \bR$
    By the same reason, for $F\in\calF_{1}$, $F|_X:X\to\bR$ also induces a continuous map $\widetilde{F|_X}: X/\ensuremath{\stackrel{1}{\sim}} \to \bR$.
    Consider any $(G,H),(\hat{G},\hat{H})\in X$ that represent different elements in $ X/\ensuremath{\stackrel{1}{\sim}}$, \Cref{thm:equiv_GNN_WL} guarantees that there exists $F\in\calF_1$ such that $F(G,H)\neq F(\hat{G},\hat{H})$, suggesting that $\{\widetilde{F|_X}:F\in\calF_1\}$ separates points on $X/\ensuremath{\stackrel{1}{\sim}}$. Therefore, by the Stone-Weierstrass theorem, one can conclude that there exists $F\in \calF_1$ such that
    \begin{equation*}
        \left\|\widetilde{F|_X}- \widetilde{\Phi|_X}\right\|_{L^\infty(X/\ensuremath{\stackrel{1}{\sim}})} <\delta,
    \end{equation*}
    which implies that
    \begin{equation*}
        |F(G,H)-\Phi(G,H)|<\delta,\quad\forall~(G,H)\in X.
    \end{equation*}
    Thus, it holds that
    \begin{equation*}
        \bP\left[|F(G,H)-\Phi(G,H)|>\delta\right]\leq \bP[\calG_{n,m}\backslash X] <\epsilon,
    \end{equation*}
    which completes the proof.
\end{proof}

The proof of \Cref{thm:GNN-1hop} (ii) requires a generalized Stone-Weierstrass theorem for equivariant functions. 

\begin{theorem}[Generalized Stone-Weierstrass theorem, Theorem 22 in \cite{azizian2020expressive}]\label{thm:equivariant_stone_weierstrass}
    Let $X$ be a compact topological space and let $\mathbf{G}$ be a finite group that acts continuously on $X$ and $\bR^n$. Define the collection of all equivariant continuous functions from $X$ to $\bR^n$ as follows:
    \begin{equation*}
        \calC_e(X,\bR^n) = \{F\in \calC(X,\bR^n) : F(g\ast x) = g\ast F(x),\ \forall~x\in X,g\in\mathbf{G}\}.
    \end{equation*}
    Consider any $\calF\subset \calC_e(X,\bR^n)$ and any $\Phi\in \calC_e(X,\bR^n)$. Suppose the following conditions hold:
    \begin{itemize}
        \item[(i)] $\calF$ is a subalgebra of $\calC(X,\bR^n)$ and $\mathbf{1}\in \calF$.
        \item[(ii)] For any $x,x'\in X$, if $f(x) = f(x')$ holds for any $f\in\calC(X,\bR)$ with $f\mathbf{1}\in \calF$, then for any $F\in \calF$, there exists $g\in \mathbf{G}$ such that $F(x) = g\ast F(x')$.
        \item[(iii)] For any $x,x'\in X$, if $F(x) = F(x')$ holds for any $F\in \calF$, then $\Phi(x) = \Phi(x')$.
        \item[(iv)] For any $x\in X$, it holds that $\Phi(x)_i = \Phi(x)_{i'},\ \forall~(i,i')\in I(x)$, where $$I(x) = \left\{(i,i')\in \{1,2,\dots,n\}^2: F(x)_i = F(x)_{i'},\ \forall~F\in\calF\right\}.$$
    \end{itemize}
    Then for any $\epsilon>0$, there exists $F\in\calF$ such that
    \begin{equation*}
        \sup_{x\in X}\| \Phi(x) - F(x) \|<\epsilon.
    \end{equation*}
\end{theorem}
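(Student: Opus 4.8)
The plan is to mirror the classical Stone-Weierstrass argument: peel off a scalar subalgebra, apply ordinary Stone-Weierstrass there to get a partition of unity, and reduce the vector-valued equivariant claim to a purely local approximation that is then settled using conditions (ii)--(iv) together with two elementary facts. Concretely, I would first set $\mathcal{A}:=\{f\in\calC(X,\bR):f\mathbf{1}\in\calF\}$. Condition (i) makes $\mathcal{A}$ a unital subalgebra of $\calC(X,\bR)$ and makes $\calF$ a module over $\mathcal{A}$ under the coordinatewise action $f\cdot(F_1,\dots,F_n):=(fF_1,\dots,fF_n)$, with $\mathbf{1}\in\calF$; both structures survive uniform closure, so $\overline{\calF}$ is a module over $\overline{\mathcal{A}}$ that still contains $\mathbf{1}$ and is closed under coordinatewise products. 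Writing $x\approx x'$ for ``$f(x)=f(x')$ for all $f\in\mathcal{A}$'', the algebra $\mathcal{A}$ separates $\approx$-classes by construction, so $Y:=X/\!\approx$ is compact Hausdorff, $\pi:X\to Y$ is a closed map, and the classical Stone-Weierstrass theorem identifies $\overline{\mathcal{A}}$ with the set of continuous functions on $X$ that are constant on $\approx$-classes; in particular $\overline{\mathcal{A}}$ contains continuous partitions of unity subordinate to any finite cover of $X$ by $\approx$-saturated open sets.

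Then I would reduce to a local statement: it suffices to show that for each $\approx$-class $S\subseteq X$ there is $F_S\in\calF$ with $F_S|_S=\Phi|_S$. Granting this, fix $\eta>0$; for each $S$ the set $V_S:=\{x:\|F_S(x)-\Phi(x)\|<\eta\}$ is an open neighborhood of $S$, so, since $\pi$ is closed, $W_S:=\pi^{-1}\big(Y\setminus\pi(X\setminus V_S)\big)$ is an $\approx$-saturated open set with $S\subseteq W_S\subseteq V_S$. Compactness of $Y$ gives finitely many $S_1,\dots,S_r$ whose $W_{S_j}$ cover $X$ and a subordinate partition of unity $\rho_1,\dots,\rho_r\in\overline{\mathcal{A}}$; then $\sum_j\rho_jF_{S_j}\in\overline{\calF}$ by the module structure, and $\|\sum_j\rho_jF_{S_j}-\Phi\|=\|\sum_j\rho_j(F_{S_j}-\Phi)\|\le\eta$ because each $\rho_j$ is supported inside $W_{S_j}\subseteq V_{S_j}$; hence $\Phi\in\overline{\calF}$.

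For the local statement I would fix $x_0$ with class $S$ and note that the evaluation $F\mapsto F(x_0)$ is a unital $\bR$-algebra homomorphism from $\calF$ into $\bR^n$ with coordinatewise multiplication, so its image is a unital subalgebra of $\bR^n$; unital subalgebras of $\bR^n$ are precisely the ``block-constant'' subspaces attached to partitions of $\{1,\dots,n\}$, and the partition attached to this image is exactly $I(x_0)$, so the image equals $\{v\in\bR^n:v_i=v_{i'}\text{ whenever }(i,i')\in I(x_0)\}$. By condition (iv), $\Phi(x_0)$ lies in this set, which gives $F_S\in\calF$ with $F_S(x_0)=\Phi(x_0)$. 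To upgrade this to $F_S(x)=\Phi(x)$ for all $x\in S$, observe that $F\mapsto(F(x),F(x_0))$ maps $\calF$ onto a linear subspace of $\bR^n\times\bR^n$ which, by condition (ii), is contained in the finite union $\bigcup_{g\in\mathbf{G}}\{(g\ast v,v):v\in\bR^n\}$ of linear subspaces; since a subspace contained in a finite union of subspaces over an infinite field lies in one of them, there is a single $g^\ast\in\mathbf{G}$ with $F(x)=g^\ast\ast F(x_0)$ for all $F\in\calF$. Equivariance of the members of $\calF$ then gives $F(x)=F(g^\ast\ast x_0)$ for all $F\in\calF$, so condition (iii) yields $\Phi(x)=\Phi(g^\ast\ast x_0)=g^\ast\ast\Phi(x_0)$ by equivariance of $\Phi$, while $F_S(x)=g^\ast\ast F_S(x_0)=g^\ast\ast\Phi(x_0)$; hence $F_S(x)=\Phi(x)$.

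The hard part will be the two synchronization/bookkeeping points rather than any single deep step: first, upgrading the per-function group element supplied by condition (ii) to one $g^\ast$ valid for all of $\calF$ simultaneously — this is where the ``subspace inside a finite union of subspaces'' fact and, crucially, the equivariance of both $F\in\calF$ and $\Phi$ are needed before condition (iii) can be invoked; and second, arranging the partition of unity to live in $\overline{\mathcal{A}}$ so that multiplying it against elements of $\overline{\calF}$ stays in $\overline{\calF}$, which forces the working cover to consist of $\approx$-saturated sets and makes essential use of the closedness of $\pi$ and of ordinary Stone-Weierstrass for the scalar algebra $\mathcal{A}$.
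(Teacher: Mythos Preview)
The paper does not prove this theorem; it is quoted as Theorem~22 of \cite{azizian2020expressive} and invoked as a black box in the proof of Theorem~\ref{thm:GNN-1hop}(ii). There is therefore no in-paper argument to compare your proposal against.

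As a stand-alone proof, your outline is correct and is essentially the standard route. The key moves are all sound: extracting the scalar unital subalgebra $\mathcal{A}=\{f:f\mathbf{1}\in\calF\}$ and identifying $\overline{\mathcal{A}}$ with the continuous functions constant on $\approx$-classes via classical Stone--Weierstrass; using the $\overline{\mathcal{A}}$-module structure of $\overline{\calF}$ together with a partition of unity subordinate to an $\approx$-saturated cover to reduce to a single class; recognizing the evaluation image $\{F(x_0):F\in\calF\}$ as the block-constant unital subalgebra of $\bR^n$ attached to $I(x_0)$ so that condition~(iv) places $\Phi(x_0)$ in it; and upgrading the per-$F$ group element from condition~(ii) to a single $g^\ast$ via the fact that a linear subspace contained in a finite union of linear subspaces over $\bR$ lies in one of them, after which equivariance of $\calF$ and of $\Phi$ plus condition~(iii) finish the local step. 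The only cosmetic point is that the hypothesis says ``compact topological space'' rather than ``compact Hausdorff'', but your quotient $Y=X/\!\approx$ is Hausdorff anyway because $\mathcal{A}$ separates its points, so the scalar Stone--Weierstrass and partition-of-unity steps go through.
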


\begin{proof}[Proof of Theorem~\ref{thm:GNN-1hop} (ii)]
    There exists a compact and permutation-invariant subset $X\subseteq \calG_{n,m}$ such that $\bP[X]>1-\epsilon$ and that for any $(G,H)\in X$, $G$ is connected and has no cycles of length greater than 3. The rest is to
    apply \Cref{thm:equivariant_stone_weierstrass} on $X$ and $\calF = \calF_{1,v}$, for which one needs to verify the four conditions in \Cref{thm:equivariant_stone_weierstrass}.
    \begin{itemize}
        \item \emph{Verification of Condition (i).} By its construction, $\calF_{1,v}$ is a subalgebra of $\calC(X,\bR)$. In addition, $\mathbf{1}\in\calF_{1,v}$ if the output function $r$ always takes the constant value $1$. 
        \item \emph{Verification of Condition (ii).} Notice that $\calF_1\mathbf 1\subset \calF_{1,v}$. If $F(G,H) = F(\hat{G},\hat{H}),\ \forall~F\in\calF_1$, then \Cref{thm:equiv_GNN_WL} implies that for any $F_v\in\calF_{1,v}$, one has $F_v(G,H) = \sigma(F_v(\hat{G},\hat{H}))$ for some permutation $\sigma\in S_n$.
        \item \emph{Verification of Condition (iii).} Suppose that $F_v(G,H) = F_v(\hat{G},\hat{H}),\ \forall~F_v\in\calF_{1,v}$. By \Cref{thm:equiv_GNN_WL}, it holds that $(G,H)\ensuremath{\stackrel{1,v}{\sim}}(\hat{G},\hat{H})$. By \Cref{thm:1hop-WL}, we know that $(G,H)$ and $(\hat{G},\hat{H})$ are isomorphic, i.e., $(G,H) = \sigma\ast (\hat{G},\hat{H})$ for some $\sigma\in S_n$, which leads to
        \begin{equation}\label{eq1:Phiv}
            \Phi_v (G,H) = \Phi_v(\sigma\ast (\hat{G},\hat{H})) = \sigma(\Phi_v (\hat{G},\hat{H})).
        \end{equation}
        Moreover, it follows from $\sigma\ast (\hat{G},\hat{H}) = (G,H)\ensuremath{\stackrel{1,v}{\sim}}(\hat{G},\hat{H})$ and \Cref{cor:1hop-WL} that
        \begin{equation}\label{eq2:Phiv}
            \Phi_v(\hat{G},\hat{H})_i = \Phi_v(\hat{G},\hat{H})_{\sigma(i)},\quad\forall~i\in\{1,2,\dots,n\}.
        \end{equation}
        Then one can conclude $\Phi_v(G,H) = \Phi_v(\hat
        G,\hat H)$ by combining \eqref{eq1:Phiv} and \eqref{eq2:Phiv}.
        \item \emph{Verification of Condition (iv).} Condition (iv) is a direct corollary of \Cref{cor:equiv_GNN_WL} and \Cref{cor:1hop-WL}.
    \end{itemize}
\end{proof}

Finally, we present the proof of \Cref{thm:1hop-WL} and \Cref{cor:1hop-WL}.

\begin{proof}[Proof of Theorem~\ref{thm:1hop-WL}]
    Let $\mathcal{A}=(A_1,A_2,\ldots,A_s)$ be an $s$-tuple of subgraphs of a graph $A$, and let $\mathcal{B}=(B_1,B_2,\ldots,B_s)$ be an $s$-tuple of subgraphs of a graph $B$. Let $V(\mathcal{A})$ be the union of the vertices in $A_1,A_2,\ldots,A_s$, and let $V(\mathcal{B})$ be the union of the vertices in $B_1,B_2,\ldots,B_s$. We say that $\mathcal{A}$ and $\mathcal{B}$ are isomorphic if there exists a bijective map of vertices of $V(\mathcal{A})$ to vertices of $V(\mathcal{B})$ such that for any $i \in \{1,2,\ldots,s\}$,
    \begin{itemize}
        \item all vertices of $A_i$ are mapped to vertices of $B_i$ with the same label/feature and vice versa
        \item all edges of $A_i$ are mapped to edges of $B_i$ and vice versa.
    \end{itemize}

    Consider $(G,H)\ensuremath{\stackrel{1}{\sim}}(\hat{G},\hat{H})$, i.e., $(G,H)$ and $(\hat{G},\hat{H})$ cannot be distinguished by the $1$-hop subgraph WL test. When there are no hash collisions and the colors stabilize, the multisets of final colors of vertices in $G$ and $\hat{G}$ are the same, and any $v_1 \in G$ and $v_2 \in \hat{G}$ with the same color must have isomorphic $1$-hop subgraphs.
    
    We abbreviate an induced subgraph of $(G,H)$ or $(\hat{G},\hat{H})$ as its set of vertices. For any set $S$ of vertices, let $\mathcal{N}(S)$ be the set of all vertices in $S$ or neighboring some vertex of $S$. We prove the following statement by induction: for any $t \in \{1,2,\ldots,|G|\}$, there exist connected isomorphic subsets $S_1 \subseteq V(G)$ and $S_2 \subseteq V(\hat{G})$ of size $t$, where $V(G)$ and $V(\hat{G})$ are vertex sets of $G$ and $\hat{G}$ respectively, such that $(S_1,\mathcal{N}(S_1))$ and $(S_2,\mathcal{N}(S_2))$ are isomorphic. For the base case, choose any two vertices in $G$ and $\hat{G}$ with the same color. For the inductive step, suppose that $S_1$ and $S_2$ are sets of size $t<|V(G)|$, and we want to find two sets $S_1'$ and $S_2'$ with size $t+1$ that satisfy the inductive statement. Let $v_1$ be a vertex not in $S_1$ adjacent to a vertex in $S_1$, and let $v_2$ be the image of $v_1$ under the isomorphism $f: (S_1,\mathcal{N}(S_1))\to (S_2,\mathcal{N}(S_2))$, i.e., $v_2=f(v_1)$. Let $\mathcal{N}(v_1)$ and $\mathcal{N}(v_2)$ be the sets of vertices with distance at most $1$ from $v_1$ and $v_2$, respectively. Then $\calN(v_1)$ and $\calN(v_2)$ are isomorphic since $v_1$ and $v_2$ are of the same color. We aim to show that $f$ can be extended to an isomorphism from $(S_1 \cup \{v_1\},\mathcal{N}(S_1) \cup \mathcal{N}(v_1))$ to $(S_2 \cup \{v_2\},\mathcal{N}(S_2) \cup \mathcal{N}(v_2))$.
    
    Consider $T_1=\mathcal{N}(v_1) \setminus \mathcal{N}(S_1)$ and $T_2=\mathcal{N}(v_2) \setminus \mathcal{N}(S_2)$. We claim that any vertex $u_1$ of $T_1$ cannot be connected to a vertex of $\mathcal{N}(S_1)$ other than $v_1$. If $u_1$ is connected to some vertex $u_2 \ne v_1$ in $\mathcal{N}(S_1)$, then both $v_1$ and $u_2$ must have some neighbor in $S_1$: call these $u_3$ and $u_4$. If $u_3=u_4$, then we have the cycle $u_1 \to v_1 \to u_3 \to u_2 \to u_1$. If $u_3 \ne u_4$, then there must be a path through edges of $S_1$ from $u_3$ to $u_4$, so we create a cycle containing $u_1 \to v_1 \to u_3 \to \cdots \to u_4 \to u_2 \to u_1$. Both of these cycles have a length greater than $3$, which is a contradiction. Thus, $u_1$ is not connected to any vertex of $\mathcal{N}(S_1)$ other than $v_1$. Similarly, any vertex of $T_2$ is not connected to a vertex of $\mathcal{N}(S_2)$ other than $v_2$.

    Another observation is that in the induced subgraph of $T_1$ (or $T_2$), the degree of each vertex is at most $1$. In particular, if $u_1\in T_1$ is connected to $u_2,u_3\in T_1$ with $u_2\neq u_3$, then there is a cycle $u_2\to u_1 \to u_3\to v_1\to u_2$ of length $4$, which is a contradiction.  

    Notice that $f: (S_1,\mathcal{N}(S_1))\to (S_2,\mathcal{N}(S_2))$ is an isomorphism and that $\calN(S_1)\cap \calN(v_1)$ is isomorphic to $\calN(S_2)\cap \calN(v_2)$. It can be seen that the multisets of vertex colors in $T_1$ and $T_2$ are the same. Additionally, edges connecting vertices in $T_1$ can be paired with edges connecting vertices in $T_2$, so that the paired edges have the same multiset of end vertex features. Since no such edges share a common end vertex, guaranteed by the above observation, one can extend $f$ to an isomorphism from $(S_1 \cup \{v_1\},\mathcal{N}(S_1) \cup \mathcal{N}(v_1))$ to $(S_2 \cup \{v_2\},\mathcal{N}(S_2) \cup \mathcal{N}(v_2))$. Thus, we have proven the inductive step and the proof is completed. 
\end{proof}

\begin{proof}[Proof of Corollary~\ref{cor:1hop-WL}]
    By the proof of Theorem~\ref{thm:1hop-WL}, there exists a permutation $\sigma\in S_n$ such that $\sigma(i)=i'$ and $\sigma\ast (G,H) = (G,H)$. Then the result holds immediately.
\end{proof}

\section{Proofs for \Cref{sec:khop_subgraphGNN}}
\label{sec:pf_khop_subgraphGNN}

\begin{proof}[Proof of \Cref{thm:GNN-khop}]
    Based on \Cref{thm:khop-WL}, the proof of \Cref{thm:GNN-khop} follows the same lines as the proof of \Cref{thm:GNN-1hop}.
\end{proof}

Next, we present the proof of \Cref{thm:khop-WL}.
Let $S$ be a subset of vertices of a graph. Define $\mathcal{N}_k(S)$ as the set of all vertices with distance at most $k$ from any vertex in $S$ and $\mathcal{N}_k(v)$ as the set of all vertices with distance at most $k$ from $v$. If $S$ is nonempty, define $d(v,S)$ as the minimum distance from $v$ to any vertex in $S$.

To prove our \Cref{thm:khop-WL}, we need a lemma, which rules out the existence of undetected edges when we do our induction.

\begin{lemma}\label{lem:cycle_2k+1}
    Let $k\geq 2$ and let $S$ be a connected subset of vertices of a connected graph $G$ with no cycles of length greater than $2k+1$. Let $u_1$ be a vertex not in $S$ adjacent to a vertex in $S$. Then, no vertex in $T=\mathcal{N}_k(u_1) \setminus \mathcal{N}_k(S)$ can be connected to a vertex in $\mathcal{N}_k(S) \setminus \mathcal{N}_k(u_1)$.
\end{lemma}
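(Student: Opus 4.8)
The plan is to argue by contradiction. Suppose some $w_1\in T$ is adjacent to some $w_2\in\mathcal{N}_k(S)\setminus\mathcal{N}_k(u_1)$; I will produce a cycle of length at least $2k+2$ in $G$, contradicting the hypothesis that $G$ has no cycles of length greater than $2k+1$.

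The first step is to pin down all the relevant distances exactly. Since $u_1\notin S$ is adjacent to a vertex of $S$, we have $d(u_1,S)=1$. The membership conditions give $d(u_1,w_1)\le k$, $d(w_1,S)\ge k+1$, $d(w_2,S)\le k$, and $d(u_1,w_2)\ge k+1$, while $d(w_1,w_2)=1$. Applying the triangle inequality to $w_1,w_2$ and $S$ forces $d(w_2,S)=k$ and $d(w_1,S)=k+1$; applying it to $u_1,w_1,w_2$ forces $d(u_1,w_1)=k$ and $d(u_1,w_2)=k+1$. This rigidity is what drives the rest of the argument.

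Next I would fix three paths: a shortest path $P$ from $u_1$ to $w_1$ (length $k$), a shortest path $Q$ from $w_2$ to the set $S$, terminating at a vertex $s_2\in S$ (length $k$), and a simple path $R$ inside the induced subgraph $G[S]$ (which is connected) from $s_2$ to a neighbor $s_0\in S$ of $u_1$. Concatenating $P$, the edge $w_1w_2$, $Q$, $R$, and the edge $u_1s_0$ yields a closed walk of length $k+1+k+|R|+1=2k+2+|R|\ge 2k+2$. The crux is to verify that this closed walk is a genuine simple cycle. Writing $P=(u_1=p_0,p_1,\dots,p_k=w_1)$ and $Q=(w_2=q_0,q_1,\dots,q_k=s_2)$, one computes, using that these are shortest paths, that $d(u_1,p_i)=i$ and $d(p_i,S)=i+1$, whereas $d(q_j,S)=k-j$ and $d(u_1,q_j)\ge d(u_1,w_2)-j=k+1-j$. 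If $p_i=q_j$ then simultaneously $i+1=k-j$ and $i\ge k+1-j$, which is impossible; hence $P$ and $Q$ are vertex-disjoint. Moreover every vertex of $P$ has positive distance to $S$ (so $P$ is disjoint from $S$, hence from $R$), and $Q$ meets $S$ only at its endpoint $s_2$; a short bookkeeping check then shows the five pieces of the walk overlap only at shared endpoints, so it is a simple cycle of length $\ge 2k+2>2k+1$ --- the contradiction we wanted. One should also record the degenerate case $s_0=s_2$ (so $R$ is trivial), where the cycle still has length exactly $2k+2$.

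I expect the vertex-disjointness of $P$ and $Q$ to be the main obstacle: without it, a short chord could collapse the long closed walk into a cycle of length only $2k-1$, and no contradiction would follow. The resolution depends crucially on the rigidity obtained in the first step --- specifically that $d(u_1,w_2)$ equals $k+1$ rather than merely being at least $k+1$ --- which keeps each $q_j$ far enough from $u_1$ to preclude $p_i=q_j$. The remaining verifications are routine distance bookkeeping.
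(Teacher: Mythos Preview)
Your proposal is correct and follows essentially the same approach as the paper's proof: both argue by contradiction, pin down the distances $d(w_1,S)=k+1$, $d(w_2,S)=k$, $d(u_1,w_1)=k$ via the triangle inequality, take shortest paths from $u_1$ to $w_1$ and from $w_2$ to $S$, and close the walk through a path in $S$ to obtain a cycle of length at least $2k+2$. The paper establishes disjointness of the two shortest paths by noting that a coincidence $p_i=q_j$ would yield a walk of length $k-1$ from $u_1$ to $w_2$, contradicting $d(u_1,w_2)\ge k+1$; your version reaches the same contradiction via the invariant $d(u_1,q_j)\ge k+1-j$, which is just a repackaging of the same idea.
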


\begin{proof}
    Assume for the sake of contradiction that there exists a vertex $u_{k+1} \in T$ connected to $v_k \in \mathcal{N}_k(S) \setminus \mathcal{N}_k(u_1)$. Notice that $d(v_k,S) \le k$ because $v_k \in \mathcal{N}_k(S)$. If $d(v_k,S)<k$, then $d(u_{k+1},S) \le k$, which contradicts $u_{k+1} \notin \mathcal{N}_k(S)$. Thus, $d(v_k,S)=k$.

    Therefore, there must exist vertices $u_2,u_3,\ldots,u_k$ and $v_0,v_1,\ldots,v_{k-1}$ such that $u_i$ and $u_{i+1}$ are connected for $i \in \{1,2,\ldots,k\}$, $v_i$ and $v_{i+1}$ are connected for $i \in \{0,1,\ldots,k-1\}$, and $v_0 \in S$.

    We claim that $u_1,u_2,\ldots,u_{k+1},v_0,\ldots,v_k$ are pairwise distinct. For any two connected vertices $a$ and $b$, notice that $|d(a,S)-d(b,S)| \le 1$ because any path of length $s$ from $a$ to a vertex of $S$ can be extended to a path of length $s+1$ from $b$ to a vertex of $S$ and vice versa. Since $d(u_1,S)=1$, $d(u_{k+1},S)=k+1$, $d(v_0,S)=0$, and $d(v_k,S)=k$, we must have $d(u_i,S)=i$ and $d(v_i,S)=i$ for all valid $i$. Thus, the only possible pairs of vertices of $u_1,u_2,\ldots,u_{k+1},v_0,\ldots,v_k$ that can be equal are $(u_i,v_i)$ for $i \in \{1,2,\ldots,k\}$. Assume for the sake of contradiction that $u_i=v_i$ for some $i$. Then, there exists a path $u_1 \to u_2 \to \cdots \to u_i \to v_{i+1} \to \cdots \to v_k$ of length $k-1$ from $u_1$ to $v_k$, contradicting the fact that $v_k \notin \mathcal{N}_k(u_1)$. Thus, the vertices $u_1,u_2,\ldots,u_{k+1},v_0,\ldots,v_k$ are pairwise distinct.

    Since $S$ is connected, there exists a path with edges in $S$ from $v_0$ to a vertex in $S$ adjacent to $u_1$. We can combine this path with $u_1 \to u_2 \to \cdots \to u_{k+1} \to v_k \to v_{k-1} \to \cdots \to v_0$ to create a cycle containing vertices $u_1,u_2,\ldots,u_{k+1},v_0,\ldots,v_k$. This cycle contains at least $2k+2$ vertices, a contradiction.
\end{proof}

\begin{proof}[Proof of \Cref{thm:khop-WL}]
    We work with the same notation and setting as in the proof of \Cref{thm:1hop-WL}.
    Consider $(G,H)\ensuremath{\stackrel{k}{\sim}}(\hat{G},\hat{H})$, i.e., $(G,H)$ and $(\hat{G},\hat{H})$ cannot be distinguished by the $k$-hop subgraph WL test. When there are no hash collisions and the colors stabilize, the multisets of final colors of vertices in $G$ and $\hat{G}$ are the same, and any $v_1 \in G$ and $v_2 \in \hat{G}$ with the same color must have isomorphic $k$-hop subgraphs rooted at them.

    We prove the following statement by induction: for any $t \in \{1,2,\ldots,|G|\}$, there exist connected isomorphic subsets $S_1 \subseteq V(G)$ and $S_2 \subseteq V(\hat{G})$ of size $t$ such that $(S_1,\calN_k(S_1))$ and $(S_2,\calN_k(S_2))$ are isomorphic. For the base case, choose any two vertices in $G$ and $\hat{G}$ with the same color. For the inductive step, suppose that $S_1$ and $S_2$ are valid sets of size $t<|V(G)|$, and we want to find two sets $S_1'$ and $S_2'$ with size $t+1$ satisfying the inductive statement. Let $v_1$ be a vertex not in $S_1$ adjacent to a vertex in $S_1$, and let $v_2$ be the image of $v_1$ under an isomorphism $f$ from $(S_1,\calN_k(S_1))$ to $(S_2,\calN_k(S_2))$. Then $\calN_k(v_1)$ and $\calN_k(v_2)$ are isomorphic since $v_1$ and $v_2$ are of the same color, and $f$ takes $\calN_k(S_1) \cap \calN_k(v_1)$ to $\calN_k(S_2) \cap \calN_k(v_2)$.

    The $k$-separability assumption guarantees that vertices in $\calN_k(v_i)\setminus\calN_k(S_i)$ have distinct colors for $i=1,2$. Thus, there is a unique way to extend $f$ as a map from $\calN_k(S_1)\cup \calN_k(v_1)$ to $\calN_k(S_2)\cup \calN_k(v_2)$, which keeps that a vertex has the same color as its image. We then verify that this extension is still an isomorphism. Consider any $u_1\in \calN_k(v_1)\setminus \calN_k(S_1)$ and any $w_1\in \calN_k(v_1)$ with $u_1\neq w_1$. Denote $u_2 = f(u_1)$ and $w_2 = f(w_1)$. Then $d(u_1,v_1) = d(u_2,v_2)=k$. We claim that $u_1$ and $w_1$ are connected if and only if $u_2$ and $w_2$ are connected.
    \begin{itemize}
        \item Case 1: $d(w_1,v_1)=d(w_2,v_2)=k$. By the $k$-separability assumption and the isomorphism between $\calN_k(v_1)$ and $\calN_k(v_2)$, we immediately have that $u_1$ and $w_1$ are connected if and only if $u_2$ and $w_2$ are connected.
        \item Case 2: $d(w_1,v_1)=d(w_2,v_2)=k-1$. Note that the multisets of vertex colors of the direct neighbors of $w_1$ and $w_2$ are the same, which combined with the $k$-separability assumption that $u_1$ and $w_1$ are connected if and only if $u_2$ and $w_2$ are connected.
        \item Case 3: $d(w_1,v_1)=d(w_2,v_2)\leq k-2$. Then $u_i$ and $w_i$ are not connected for $i=1,2$.
    \end{itemize}

    By \Cref{lem:cycle_2k+1} and the above arguments, we conclude that $f$ is an isomorphism from $(S_1 \cup \{v_1\},\mathcal{N}_k(S_1) \cup \mathcal{N}_k(v_1))$ to $(S_2 \cup \{v_2\},\mathcal{N}_k(S_2) \cup \mathcal{N}_k(v_2))$. This completes the inductive step.
\end{proof}

\section{Proofs for \Cref{sec:extension}}
\label{sec:pf_khop_GNN}

\begin{proof}[Proof of \Cref{thm2:GNN-khop}]
    Based on the \Cref{thm2:khop-WL}, one can prove \Cref{thm2:GNN-khop} following the same lines in the proof of \Cref{thm:GNN-1hop}.
\end{proof}

\begin{proof}[Proof of \Cref{thm2:khop-WL}]
    Let us consider the final stabilized colors on $(G,H)$ and $(\hat{G},\hat{H})$ generated by $k$-hop WL test ($k\geq 2$) without hash collisions. 
    For any vertex $v$, notice that any pair of vertices in $\mathcal{N}_k(v)$ have a distance at most $2k$ from each other, so they are of different colors. Suppose $u_1$ and $u_2$ are vertices in $\mathcal{N}_{k-1}(v)$. If the color of $u_1$ implies it has a neighbor with the same color as $u_2$, then this neighbor must be $u_2$, as the only neighbors of $u_1$ are in $\mathcal{N}_k(v)$ and all vertices in $\mathcal{N}_k(v)$ are of different colors. Otherwise, $u_1$ and $u_2$ cannot be connected by edges. Thus, for any $u_1$ and $u_2$ in $\mathcal{N}_{k-1}(v)$, we can uniquely determine whether there is an edge between $u_1$ and $u_2$ by their colors. This implies that two vertices of the same color must have isomorphic $(k-1)$-hop subgraphs, i.e., implementing $(k-1)$-hop subgraph WL test does not lead to a strict color refinement. Then the result is a direct corollary of \Cref{thm:1hop-WL} and \Cref{thm:khop-WL}.
\end{proof}

\end{document}